\newcolumntype{C}{>{$}c<{$}}
\theoremstyle{plain}
\newtheorem{definitionenv}{Definition}
\newtheorem{lemmaenv}[definitionenv]{Lemma}
\newtheorem{theoremenv}[definitionenv]{Theorem}
\newtheorem{corollaryenv}[definitionenv]{Corollary}
\newtheorem{propositionenv}[definitionenv]{Proposition}
\newtheorem{conjectureenv}[definitionenv]{Conjecture}
\newtheorem{remarkenv}[definitionenv]{Remark}
\newenvironment{remark}{\begin{remarkenv}\rm}{\end{remarkenv}}
\newcommand{\br}{\begin{remark}}
	\newcommand{\er}{\end{remark}}
\newtheorem{exampleenv}{Example}
\newtheorem{app-lemmaenv}[section]{Lemma}
\newenvironment{definition}{\begin{definitionenv}\rm}{\end{definitionenv}}
\newenvironment{lemma}{\begin{lemmaenv}\rm}{\end{lemmaenv}}
\newenvironment{theorem}{\begin{theoremenv}\rm}{\end{theoremenv}}
\newenvironment{corollary}{\begin{corollaryenv}\rm}{\end{corollaryenv}}
\newenvironment{example}{\begin{exampleenv}\rm}{\end{exampleenv}}
\newenvironment{proposition}{\begin{propositionenv}\rm}{\end{propositionenv}}
\newenvironment{conjecture}{\begin{conjectureenv}\rm}{\end{conjectureenv}}
\newenvironment{app-lemma}{\begin{app-lemmaenv}\rm}{\end{app-lemmaenv}}
\newcommand{\bd}{\begin{definition}}
	\newcommand{\ed}{\end{definition}}
\newcommand{\bl}{\begin{lemma}}
	\newcommand{\el}{\end{lemma}}
\newcommand{\elp}{\hspace*{\fill} $\Box$
\end{lemma}}
\newcommand{\bt}{\begin{theorem}}
\newcommand{\et}{\end{theorem}}
\newcommand{\etp}{\hspace*{\fill} $\Box$
\end{theorem}}
\newcommand{\bc}{\begin{corollary}}
\newcommand{\ec}{\end{corollary}}
\newcommand{\ecp}{\hspace*{\fill} $\Box$
\end{corollary}}
\newcommand{\bcj}{\begin{conjecture}}
\newcommand{\ecj}{\end{conjecture}}
\newcommand{\be}{\begin{example}}
\newcommand{\ee}{\end{example}}
\newcommand{\eep}{\hspace*{\fill} $\Box$
\end{example}}
\newcommand{\bp}{\begin{proposition}}
\newcommand{\ep}{\end{proposition}}
\newcommand{\epp}{
\end{proposition}}
\newcommand{\eeq}{ \setcounter{equation} {\value{enumi}}}
\def\beq{\begin{equation}}
\def\eeq{\end{equation}}
\def\bean{\begin{IEEEeqnarray*}{rCl}}
\def\eean{\end{IEEEeqnarray*}}
\begin{document}
%
\title{A Novel Loss Function-based Support Vector Machine for Binary Classification}
%
%
%

\author{Yan Li
           and Liping Zhang
\thanks{Yan Li and Liping Zhang are with the Department of Mathematical Sciences, Tsinghua University, Beijing 100084, China (email:li-yan20@mails.tsinghua.edu.cn; lipingzhang@tsinghua.edu.cn)}
 }
\maketitle

\begin{abstract}
The previous support vector machine(SVM) including  $0/1$ loss SVM, hinge loss SVM, ramp loss SVM, truncated pinball loss  SVM, and others, overlooked the degree of penalty for the correctly classified samples within the margin. This oversight affects the generalization ability of the SVM classifier to some extent. To address this limitation, from the perspective of confidence margin, we propose a novel Slide loss function ($\ell_s$) to construct the support vector machine classifier($\ell_s$-SVM). By introducing the concept of proximal stationary point, and utilizing the property of Lipschitz continuity, we derive the first-order optimality conditions for  $\ell_s$-SVM. Based on this, we define the $\ell_s$ support vectors and working set of $\ell_s$-SVM. To efficiently handle $\ell_s$-SVM, we devise a fast alternating direction method of multipliers with the working set ($\ell_s$-ADMM), and provide the convergence analysis. The
numerical experiments on real world datasets confirm the robustness and effectiveness of the
proposed method.
\end{abstract}

\begin{IEEEkeywords}
Support vector machine, Loss function, Working set, ADMM, Proximal Operator
\end{IEEEkeywords}

%
\IEEEpeerreviewmaketitle

\section{Introduction}
Support Vector Machine (SVM) has emerged as powerful and versatile tools in the domains of data mining, pattern recognization and machine learning, providing robust solutions to classification and regression problems. Introduced by Cortes and Vapnik~\cite{cortes1995support}, SVM has gained widespread popularity due to their ability to handle high-dimensional data, and generalization to unseen instances. At its essence, SVM is a supervised learning algorithm designed for both classification and regression tasks. Its primary goal is to find an optimal hyperplane that minimizes the classification errors on training data while maximizing the margin between them and obtain the better generalization ability. This hyperplane serves as a decision boundary, enabling the accurate predictions for new, unseen data points. SVM has been shown to be a formidable tool in addressing practical binary classification problems, in recent years, it has become one of the most used classification methods~\cite{cervantes2020comprehensive}. 

Given the training set $\{(\bm{x}_i,\bm{y}_i):i\in[m]\}\subseteq\mathbb{R}^n\times \{+1,-1\}$, where  $\bm{x}_i$ is the input feature vector and $\bm{y}_i$ denotes the corresponding output label. When the training samples can be linearly separated, that is, we  assume the existence of a hyperplane $\langle\bm{w},\bm{x}\rangle+b=0$ that perfect separates the training sample into two populations of positively and negatively labeled points, the pair $(\bm{w},b)$ returned by SVM is the solution of the following convex optimization problem 
\begin{equation}
    \min_{\bm{w}\in\mathbb{R}^n,b\in\mathbb{R}}\frac{1}{2}\Vert \bm{w}\Vert_2^2\quad 
\textup{s.t.}\ \bm{y}_i(\langle\bm{w},\bm{x}_i\rangle+b)\geq 1\ \forall i\in[m].
\end{equation}
In most practical settings, the training data is not linearly separable, which implies that for any hyperplane $\langle\bm{w},\bm{x}\rangle+b=0$, there exists sample $\bm{x}_i$ such that $\bm{y}_i(\langle\bm{w},\bm{x}_i\rangle+b)\ngeq 1$. This leads to the following general optimization defining SVM in the non-separable case :
\begin{equation}   \min_{\bm{w}\in\mathbb{R}^n,b\in\mathbb{R}}\frac{1}{2}\Vert \bm{w}\Vert_2^2+C\sum_{i=1}^{m}\ell(\bm{y}_i, f(\bm{x}_i))
\end{equation}
where $C>0$ represents a trade-off parameter, $\ell:\mathbb{R}\rightarrow\mathbb{R}$ stands for the loss function and $f(x):=\langle\bm{w},\bm{x}\rangle+b$. The first term $\frac{1}{2}\Vert \bm{w}\Vert_2^2$ is to maximize the margin and the second term controls the number of misclassification samples. The well known loss function is Heaviside step function, (or simply the $0/1$ loss):
\begin{equation*}
    \ell_{0/1}(t)=
    \begin{cases}
        1,\ t>0\\
        0,\ t\leq 0.
    \end{cases}
\end{equation*}
Specifically, there are 
\begin{itemize}
    \item the hard margin loss function~\cite{brooks2011support}~\cite{DBLP:books/daglib/0097035}:
    $$\ell(\bm{y}_i, f(\bm{x}_i))=\ell_{0/1}(1-\bm{y}_i f(\bm{x}_i)),$$
    \item the misclassification loss function~\cite{evgeniou2000regularization}~\cite{wang2020comprehensive}:
    $$\ell(\bm{y}_i, f(\bm{x}_i))=\ell_{0/1}(-\bm{y}_i f(\bm{x}_i))$$
\end{itemize}
 in the SVM classifier. Researchers have focused on developing other surrogate functions that are more tractable, since the non-convexity and discontinuity of $0/1$ loss make the problems hard to optimize. Notably one like 
hinge loss $\ell_h(t)=\max\{0, t\}$~\cite{cortes1995support}, while the convexity nature of which leads to the SVM classifier is sensitive to the presence of noises and outliers in training samples~\cite{yin2014fault}. To ameliorate the effectiveness of $\ell_h(t)$, other convex surrogates such as square hinge loss~\cite{zhang2001text}, huberized hinge loss~\cite{wang2008hybrid}, pinball loss~\cite{jumutc2013fixed}, $\epsilon$-insensitive pinball loss~\cite{liang2021support} are proposed, and the relevant solving methods on SVM classifier with the convex loss functions are researched, see e.g.,~\cite{yan2020efficient}~\cite{huang2016solution}~\cite{allen2018katyusha}~\cite{zhu2020support}~\cite{hsieh2008dual}~\cite{huang2015sequential}~\cite{wang2022safe}. To improve the situation that outliers play a leading role in determining the decision boundary, the truncated hinge loss~\cite{wu2007robust} $\ell_r(t)=\max\{0,\min\{\mu,t\}\}$ (ramp loss~\cite{brooks2011support} for $\mu=1$) is applied to solve the classification problem, which enhance the robustness to outliers. Other non-convex surrogates including rescaled hinge loss~\cite{xu2017robust,singla2020robust}, truncated pinball loss~\cite{shen2017support}, truncated least squares loss~\cite{chen2018sparse}, truncated logistic loss~\cite{park2011robust}, etc. have also attracted widespread attention to increase the generalization power of SVM, while the non-convexity of these loss functions bring the challenges in numerical computations. 
Recently, Wang et al.~\cite{wang2021support} proposed an efficient method to solve SVM with hard margin loss and develop the optimality theory under the assumption that the training samples obey the full column rank property, which is a meaningful attempt on the SVM classifier. 

Although the $0/1$ loss in SVM classifier quantifies the classification errors which essentially counts the number of misclassified samples or the samples falling within the margin, it does not explicitly consider the severity of these errors. Specifically, in the $0/1$ SVM classifier with the hard margin loss, samples that are correctly classified by the hyperplane $f(\bm{x})=\bm{0}$, satisfying $1>\bm{y}_if(\bm{x}_i)>0$, are penalized with a cost of $1$ even if the magnitude $\vert f(\bm{x}_i)\vert$ is sufficiently closing to $1$.
Similarly, in the $0/1$ SVM classifier with the misclassification loss, samples that are correctly classified by the hyperplane $f(\bm{x})=\bm{0}$ have a loss value of $0$, regardless of how close they are to the hyperplane $f(\bm{x})=\bm{0}$.  Therefore, the accuracy and efficiency of the SVM classifier with $0/1$ loss would be impacted to some extent. Other alternative loss functions, such as hinge loss, pinball loss, truncated least squares loss, truncated pinball loss, etc., also face a common issue:
 they do not applying the different degrees of penalization to distinguish the samples that are correctly classified but fall between the margin, including those near $f(\bm{x})=0$ and $f(\bm{x})=\pm{1}$.
 
Basing on above analysis, we give a new loss function of SVM classifier in view of the confidence margin~\cite{mohri2018foundations}. For any parameter $1>\epsilon, v>0$, we will define a Slide loss function, penalizes $f$ with the cost of $1$ when it misclassifies point $\bm{x}$ ($\bm{y}f(\bm{x})\leq 0$) and when it correctly classifies $\bm{x}$ with confidence no more than $1-v$ ($\bm{y}f(\bm{x})<1-v$), but also penalises $f$ (linearly) when it correctly classifies $\bm{x}$ with confidence no more than $1-\epsilon$ and more than $1-v$ ($1-v\leq\bm{y}f(\bm{x})<1-\epsilon$). Under the situation that the confidence of the sample $\bm{x}$ more than $1-\epsilon$, that is the sample is sufficiently close to anyone of the two classifier hyperplanes, it will not penalize $f$. We give the detail definition of Slide loss as follows: 
\begin{equation*}
    \ell_s(t):=
    \begin{cases}
        1\ &\textup{if}\ t>v\\
        \frac{t-\epsilon}{v-\epsilon}\ &\textup{if}\ v\geq t>\epsilon\\
        0\ &\textup{if}\ t\leq \epsilon
    \end{cases}
\end{equation*}
The Slide loss has some attractive properties. First, it has sparsity and robustness, which is benefit for weakening the impact from the outliers. Second, it consider the error degree and provides the varying degrees of penalization, when the samples are falling in the margin, and hence it enhances the generalization power of SVM classifier to some extent. Third, a key benefit of Slide loss as opposed to the $0/1$ loss is that it is $\frac{1}{v-\epsilon}$-Lipschitz, which is important to obtain the optimal theory. Moreover, it has a explicit expression of the limiting subdifferential and the proximal operator. 

In this paper, we formulate the robust binary SVM classifier as the following unconstrained optimization problem:
\begin{equation}\label{lspro}
    \min_{\bm{w},b}\frac{1}{2}\Vert \bm{w}\Vert^2+C\sum_{i=1}^m\ell_s(1-\bm{y}_i(\langle\bm{w},\bm{x}_i\rangle+b)),
\end{equation}
where $C$ is the penalty parameter. It can be abbreviated as $\ell_s$-SVM. The main contributions can be summarized as follows.:

\begin{itemize}
    \item Basing on the weakness of $0/1$ loss and other alternative loss functions, we propose a novel Slide loss ($\ell_s$) function, which allow us to present a new $\ell_s$-SVM classifier. We conducted an in-depth study on the subdifferential and proximal operator of the $\ell_s$ loss function. Based on these, we define the proximal stationary point of $\ell_s$-SVM and establish the optimality conditions. 
    \item Leveraging the aforementioned theoretical analysis, a precise definition of support vector is introduced, which is a small fraction of the entire training dataset. This geometric characteristic inspires us to devise a working set, and we integrate it with the ADMM algorithm to solve $\ell_s$-SVM, referred to as $\ell_s$-ADMM. This approach effectively reduces the computational cost per iteration, especially for large-scale datasets.
\end{itemize}
The rest of the paper is organized as follows. Section 2 gives the theoretical analysis of $\ell_s$ loss function, including the expression of subdifferential and proximal operator. The concept of proximal stationary point and the first order optimality conditions are given in Section 3. The whole framework of $\ell_s$-ADMM, which serve as the topic of the current paper, is explicitly studied in Section 4. In Section 5, the numerical experiments will be presented to highlight the robustness and effectiveness of $\ell_s$-SVM compared to the other six solvers.
\section{Theoretical analysis for $\ell_s$ loss function}
In this section, we conduct an in-depth study on the subdifferential and proximal operator of $\ell_s$ loss function. This research provides a solid theoretical foundation for establishing optimality conditions and the framework of algorithm in subsequent sections. To derive this, we give some necessary definitions.

\begin{definition}[Subgradient~\cite{rockafellar2009variational}]\label{subgzlp}
Let $f: \mathbb{R}^n\rightarrow \mathbb{R}\cup\{+\infty\}$  be a proper lower semicontinuous function and ${\rm{dom}}\;f\!:=\!\{\bm{x}\in\mathbb{R}^n:\!f(\bm{x})< +\infty\}$.
\begin{itemize}
  \item[\rm{(a)}] For each $\bm{x}\in {\rm{dom}}\;f$, the vector $\bm{v}\in\mathbb{R}^n$ is said to be a regular subgradient of $f$ at $\bm{x}$, written $\bm{v}\in\hat{\partial}f(\bm{x})$, if
  $$ f(\bm{y})\geq f(\bm{x})+\langle\bm{v},\bm{y}-\bm{x}\rangle+o(\Vert\bm{y}-\bm{x}\Vert).$$
 The set  $\hat{\partial}f(\bm{x})$ is called the regular subdifferential of $f$ at $\bm{x}$.
\item[\rm{(b)}] The vector $\bm{v}\in\mathbb{R}^n$ is said to be a (limiting) subgradient of $f$ at $\bm{x}\in{\rm{dom}}\;f$, written $\bm{v}\in\partial f(\bm{x})$, if there exists $\{\bm{x}^k\}\subset{\rm{dom}}\;f$ and $\{\bm{v}^k\}\subset\hat{\partial}f(\bm{x}^k)$ such that
      $$\bm{x}^k\to \bm{x},\quad  f(\bm{x}^k)\rightarrow f(\bm{x}),\quad \bm{v}^k\rightarrow\bm{v}, \quad \text{as $k\to\infty$}.$$
      The set $\partial f(\bm{x})$ is called the (limiting) subdifferential of $f$ at $\bm{x}$.
\end{itemize}
\end{definition}

The following proposition provides the explicit expression for the subdifferential of $\ell_s$ loss function.
\begin{proposition}
    Given $\epsilon$ and $v$, the subdifferential of the $\ell_s$ loss function $\ell_s$ at $t\in\mathbb{R}$ is:
\begin{equation}\label{subdif_slide}
        \partial\ell_s(t)=
        \begin{cases}
            0,\ &\textup{if}\;t>v\\
            \{0, \frac{1}{v-\epsilon}\},\ &\textup{if}\;t=v\\
            \frac{1}{v-\epsilon},\ &\textup{if}\;\epsilon<t<v\\
            [0, \frac{1}{v-\epsilon}],\ &\textup{if}\;t=\epsilon\\
            0,\ &\textup{if}\;t<\epsilon\\
        \end{cases}
    \end{equation}
\end{proposition}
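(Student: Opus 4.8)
The plan is to exploit the piecewise structure of $\ell_s$, treating the three open intervals on which it is smooth separately from the two breakpoints $t=\epsilon$ and $t=v$. On each open interval the function is continuously differentiable, so both the regular and the limiting subdifferentials collapse to the classical derivative: for $t>v$ and for $t<\epsilon$ the function is constant, giving $\partial\ell_s(t)=\{0\}$, while for $\epsilon<t<v$ it is the affine map $t\mapsto\frac{t-\epsilon}{v-\epsilon}$, giving $\partial\ell_s(t)=\{\frac{1}{v-\epsilon}\}$. These cases recover four of the five branches in \eqref{subdif_slide} and require no serious work.

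The two breakpoints must be handled through Definition~\ref{subgzlp} directly. At $t=\epsilon$ the function has a \emph{convex} corner, with left derivative $0$ and right derivative $\frac{1}{v-\epsilon}$. I would verify from the regular-subgradient inequality that a scalar $g$ satisfies $\ell_s(y)\ge\ell_s(\epsilon)+g(y-\epsilon)+o(|y-\epsilon|)$ precisely when $0\le g\le\frac{1}{v-\epsilon}$: testing $y>\epsilon$ forces $g\le\frac{1}{v-\epsilon}$, while testing $y<\epsilon$ forces $g\ge 0$. Thus $\hat\partial\ell_s(\epsilon)=[0,\frac{1}{v-\epsilon}]$. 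Since $\ell_s$ coincides locally with $\max\{0,\frac{t-\epsilon}{v-\epsilon}\}$ and is therefore convex on a neighbourhood of $\epsilon$, the limiting subdifferential agrees with the regular one, yielding $\partial\ell_s(\epsilon)=[0,\frac{1}{v-\epsilon}]$.

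The genuinely delicate point is $t=v$, where the corner is \emph{concave}: the slope drops from $\frac{1}{v-\epsilon}$ on the left to $0$ on the right. Here I expect the regular subdifferential to be empty. Indeed, feeding the two one-sided behaviours into the regular-subgradient inequality yields the incompatible requirements $g\le 0$ (from $y>v$) and $g\ge\frac{1}{v-\epsilon}$ (from $y<v$), which have no common solution because $\frac{1}{v-\epsilon}>0$. The limiting subdifferential must therefore be assembled via part (b) of Definition~\ref{subgzlp}: one takes sequences $t^k\to v$ with regular subgradients $v^k\in\hat\partial\ell_s(t^k)$ and $\ell_s(t^k)\to\ell_s(v)$, then collects all accumulation points of $\{v^k\}$. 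Approaching from the right ($t^k\in(v,\infty)$) gives $v^k=0$, and approaching from the left ($t^k\in(\epsilon,v)$) gives $v^k=\frac{1}{v-\epsilon}$, producing $\partial\ell_s(v)=\{0,\frac{1}{v-\epsilon}\}$ and completing the last branch of \eqref{subdif_slide}.

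The main obstacle is precisely this concave corner at $t=v$. Because the regular subdifferential is empty there, the answer cannot be read off from one-sided derivatives, and the argument must show that the limiting operation produces exactly the two-point set $\{0,\frac{1}{v-\epsilon}\}$ and nothing more. The key is that continuity of $\ell_s$ guarantees $\ell_s(t^k)\to\ell_s(v)$ along any sequence, so the constraint $t^k\to v$ confines the relevant regular subgradients to those of the two adjacent smooth branches, ruling out any spurious limit values.
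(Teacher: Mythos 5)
Your proof is correct and takes essentially the same route as the paper's: classical derivatives on the three smooth pieces (the paper cites~\cite[Exercise 8.8]{rockafellar2009variational} for this), and the limiting subdifferential at the two kinks assembled from sequences approaching $t=\epsilon$ and $t=v$ from each side. The only difference is that you actually verify from the defining inequality what the paper merely asserts --- namely $\hat{\partial}\ell_s(v)=\emptyset$ at the concave corner (incompatible one-sided requirements $g\leq 0$ and $g\geq\frac{1}{v-\epsilon}$) and $\hat{\partial}\ell_s(\epsilon)=[0,\frac{1}{v-\epsilon}]$ at the convex one, with the local-convexity remark ensuring the limiting and regular subdifferentials coincide there --- which tightens the same argument rather than replacing it.
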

\begin{proof}
    Clearly, $\ell_s$ loss function is non-differentiable only at $t=\epsilon$ and $t=v$. Based on this, we discuss the subdifferential of the  $\ell_s$ loss function in three cases: 
\begin{itemize}
    \item [(a)] When $t>v$, $t<\epsilon$, and $\epsilon<t<v$, the function $\ell_s$ is differentiable, and there exists a neighborhood of $t$ where it is smooth. Therefore, by the fact in ~\cite[Exercise 8.8]{rockafellar2009variational}, for $t>v$ or $t<\epsilon$, $\partial\ell_s(t)=\{0\}$; for $\epsilon<t<v$, $\partial\ell_s(t)=\{\frac{1}{v-\epsilon}\}$.
    \item[(b)] When $t=v$, using \Cref{subgzlp}, we have: 
    \begin{itemize}
        \item [(1)] If $t_k\rightarrow v^{+}$, then the regular subdifferential $\hat{\partial}\ell_s(t_k)=\{0\}$.
        \item[(2)] If $t_k\rightarrow v^{-}$, then the regular subdifferential $\hat{\partial}\ell_s(t_k)=\{\frac{1}{v-\epsilon}\}$.
        \item[(3)] If $t_k\rightarrow v$ and $t_k=v$, then the regular subdifferential $\hat{\partial}\ell_s(t_k)=\emptyset$.
    \end{itemize}
    Therefore, $\partial\ell_s(t_k)=\{0, \frac{1}{v-\epsilon}\}$.
    \item[(c)] When $t=\epsilon$, using \Cref{subgzlp}, we have:
    \begin{itemize}
        \item[(1)] If $t_k\rightarrow \epsilon^{+}$, then the regular subdifferential $\hat{\partial}\ell_s(t_k)=\{\frac{1}{v-\epsilon}\}$.
        \item[(2)] If $t_k\rightarrow \epsilon^{-}$, then the regular subdifferential $\hat{\partial}\ell_s(t_k)=\{0\}$.
        \item[(3)]If $t_k\rightarrow \epsilon$ and $t_k=\epsilon$, then the regular subdifferential $\hat{\partial}\ell_s(t_k)=[0,\frac{1}{v-\epsilon}]$.
    \end{itemize}
    Therefore, $\partial\ell_s(t_k)=[0, \frac{1}{v-\epsilon}]$.
\end{itemize}
  In conclusion, we provide the subdifferential of $\ell_s$ loss function as in \eqref{subdif_slide}.  
\end{proof}

The following proposition provides the explicit expression of the proximal operator for $\ell_s$ loss function.

\begin{proposition}
   For any given $\gamma$, $C$, and $s \in \mathbb{R}$. The proximal operator 
    \begin{equation*}
    \begin{aligned}
     \rm{Prox}_{\gamma C\ell_s}(s):&=\arg\min_t\{C\ell_s(t)+\frac{1}{2\gamma}(t-s)^2\}\\
     &=\arg\min_t\{\gamma C\ell_s(t)+\frac{1}{2}(t-s)^2\}
    \end{aligned}
\end{equation*}
admits a closed form as: 
    \begin{itemize}
        \item [(a)] for $0<\gamma C<2(v-\epsilon)^2$,\\
\begin{equation}\label{def_prox1}
    \textup{Prox}_{\gamma C \ell_{s}}(s)=
    \begin{cases}
       s\  &\textup{if}\ s>v+\frac{\gamma C}{2(v-\epsilon)} \\
       s\ \textup{or}\ s-\frac{\gamma C}{(v-\epsilon)}\  &\textup{if}\ s= v+\frac{\gamma C}{2(v-\epsilon)}\\
       s-\frac{\gamma C}{(v-\epsilon)}\  &\textup{if}\ \frac{\gamma C}{(v-\epsilon)}+\epsilon \leq s<v+\frac{\gamma C}{2(v-\epsilon)}  \\
       \epsilon\  &\textup{if}\ \epsilon<s<\frac{\gamma C}{(v-\epsilon)}+\epsilon \\
       s\ &\textup{if}\  s\leq \epsilon;
      \end{cases}
    \end{equation}
    \item [(b)] for $\gamma C\geq 2(v-\epsilon)^2$,\\
    \begin{equation}\label{solu_geq}
     \textup{Prox}_{\gamma C \ell_{s}}(s)=
     \begin{cases}
      s\ &\textup{if}\ s>\sqrt{2\gamma C}+\epsilon \\
      s\ \textup{or}\ \epsilon\ &\textup{if}\ s=\sqrt{2\gamma C}+\epsilon\\
      \epsilon\ &\textup{if}\ \epsilon<s<\sqrt{2\gamma C}+\epsilon\\
      s\ &\textup{if}\ s\leq \epsilon.
     \end{cases}
    \end{equation}
    \end{itemize}
\end{proposition}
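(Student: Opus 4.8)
The plan is to minimise the one–dimensional objective $g(t):=\gamma C\,\ell_s(t)+\tfrac12(t-s)^2$ directly. Since $\ell_s$ is continuous and piecewise linear, $g$ is continuous and coercive, so its minimum is attained and $\textup{Prox}_{\gamma C\ell_s}(s)$ is nonempty; it remains only to locate the minimisers. I would exploit that $\ell_s$ is affine on each of the three regions $R_1=(-\infty,\epsilon]$, $R_2=[\epsilon,v]$, $R_3=[v,+\infty)$, so that $g$ restricted to each region is a strictly convex quadratic and its restricted minimiser is unique. On $R_1$ and $R_3$ the loss is constant, so the restricted minimiser is the projection of the vertex $s$ onto the region; on $R_2$ the loss has slope $\tfrac1{v-\epsilon}$, whence $g'(t)=\tfrac{\gamma C}{v-\epsilon}+(t-s)$ and the restricted minimiser is the projection of the shifted vertex $s-\tfrac{\gamma C}{v-\epsilon}$ onto $[\epsilon,v]$.

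Next I would compute the three restricted minimum values as functions of $s$ and keep the smallest; the global minimiser is whichever restricted minimiser achieves it. The decisive comparisons are between the ``stay'' candidate $t=s$ (optimal on $R_3$ when $s\ge v$, with value $\gamma C$) and the two ``move-down'' candidates, the interior slope point $s-\tfrac{\gamma C}{v-\epsilon}$ and the kink $t=\epsilon$. A direct evaluation gives $g\!\left(s-\tfrac{\gamma C}{v-\epsilon}\right)=\tfrac{\gamma C\,(s-\epsilon)}{v-\epsilon}-\tfrac{(\gamma C)^2}{2(v-\epsilon)^2}$ and $g(\epsilon)=\tfrac12(s-\epsilon)^2$, and setting each equal to $\gamma C$ yields the two candidate breakpoints $s=v+\tfrac{\gamma C}{2(v-\epsilon)}$ and $s=\epsilon+\sqrt{2\gamma C}$, respectively. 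The lower slope breakpoint $s=\epsilon+\tfrac{\gamma C}{v-\epsilon}$, where the shifted vertex reaches the left kink, will complete the list of thresholds.

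The dichotomy in the statement then emerges from a feasibility check at the tie point. The slope point $s-\tfrac{\gamma C}{v-\epsilon}$ is an admissible interior minimiser of $g$ on $R_2$ only when it lies in $(\epsilon,v]$; evaluating it at $s=v+\tfrac{\gamma C}{2(v-\epsilon)}$ gives $v-\tfrac{\gamma C}{2(v-\epsilon)}$, which exceeds $\epsilon$ precisely when $\gamma C<2(v-\epsilon)^2$. In this regime the minimiser moves continuously from $s$, through the slope segment, and finally to $\epsilon$ as $s$ decreases, which I would assemble into the five branches of \eqref{def_prox1}. When $\gamma C\ge 2(v-\epsilon)^2$ the shifted vertex falls below $\epsilon$ for every relevant $s$, so $g$ is increasing throughout $R_2$ and the slope interior is never optimal; the contest reduces to $t=s$ versus $t=\epsilon$, whose tie at $s=\epsilon+\sqrt{2\gamma C}$ produces \eqref{solu_geq}. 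Here I would also record $\epsilon+\sqrt{2\gamma C}\ge 2v-\epsilon>v$, so that the ``stay'' branch genuinely sits in $R_3$.

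I expect the main obstacle to be bookkeeping rather than algebra: one must treat the tie points carefully to obtain the non-unique ``$s$ or $\cdots$'' branches, and must verify at every threshold that each proposed minimiser is feasible for its own region, so that no spurious candidate from an adjacent region undercuts it. The quadratic evaluations themselves are routine.
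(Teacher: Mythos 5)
Your proposal is correct and takes essentially the same approach as the paper: both minimise the strictly convex quadratic restriction of $\gamma C\,\ell_s(t)+\frac{1}{2}(t-s)^2$ on each affine piece of $\ell_s$, compare the restricted minimum values, and obtain the thresholds $s=v+\frac{\gamma C}{2(v-\epsilon)}$, $s=\epsilon+\frac{\gamma C}{v-\epsilon}$, $s=\epsilon+\sqrt{2\gamma C}$ together with the regime split at $\gamma C=2(v-\epsilon)^2$. The only difference is organisational: you merge the paper's five pieces (which treat $t=\epsilon$ and $t=v$ as singleton cases) into three closed intervals via continuity and locate breakpoints by solving tie equations, whereas the paper enumerates the $s$-intervals exhaustively.
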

\begin{proof}
Combining the definition of $\ell_s$ loss function, we can determine that $\textup{Prox}_{\gamma C\ell_s}(s)$ corresponds to the minimizer of the following piecewise function, denoted as $t^*$:
\begin{equation*}
\Phi(t):=
    \begin{cases}
        \phi_1(t)=\gamma C+\frac{1}{2}(t-s)^2\ \;&\textup{if}\;t>v\\
        \phi_2(t)=\gamma C+\frac{1}{2}(v-s)^2\ \;&\textup{if}\;t=v\\
        \phi_3(t)=\frac{\gamma C}{v-\epsilon}(t-\epsilon)+\frac{1}{2}(t-s)^2\ \;&\textup{if}\;\epsilon<t<v\\
        \phi_4(t)=\frac{1}{2}(\epsilon-s)^2\ \;&\textup{if}\;t=\epsilon\\
        \phi_5(t)=\frac{1}{2}(t-s)^2\ \;&\textup{if}\;t<\epsilon\\
    \end{cases}
\end{equation*}
For $i=1,2,3,4,5$,  let $\phi_i^*$ denote the minimum value of the function $\phi_i(t)$ and $t_i^*$ denote the corresponding point where the minimum is achieved. By simple calculations, we have: 
\begin{equation*}
    \begin{cases}
        \phi_1^*=\gamma C,\ &t_1^*=s\\
        \phi_2^*=\gamma C+\frac{1}{2}(v-s)^2,\ &t_2^*=v\\
        \phi_3^*=\frac{\gamma C}{v-\epsilon}(s-\epsilon)-\frac{1}{2}\left(\frac{\gamma C}{v-\epsilon}\right)^2,\ &t_3^*=s-\frac{\gamma C}{v-\epsilon}\\
        \phi_4^*=\frac{1}{2}(s-\epsilon)^2,\ &t_4^*=\epsilon\\
        \phi_5^*=0,\ &t_5^*=s.
    \end{cases}
\end{equation*}
Now we proceed with the discussion in three cases: 
\begin{itemize}
    \item [(i)]When $\gamma C<2(v-\epsilon)^2$:
    \begin{itemize}
        \item [(1)] If $s>v+\frac{\gamma C}{2(v-\epsilon)}$, then $\min\{\phi_2^*, \phi_3^*, \phi_4^*, \phi_5^*\}>\phi_1^*$, hence $t^*=s$.
        \item[(2)] If$s=v+\frac{\gamma C}{2(v-\epsilon)}$, then $\min\{\phi_2^*, \phi_4^*, \phi_5^*\}>\phi_1^*=\phi_3^*$, hence $t^*=s$or$s-\frac{\gamma C}{v-\epsilon}$.
        \item[(3)]If $\sqrt{2\gamma C}+\epsilon<s<v+\frac{\gamma C}{2(v-\epsilon)}$, then $\min\{\phi_1^*, \phi_2^*, \phi_4^*, \phi_5^*\}>\phi_3^*$, hence $t^*=s-\frac{\gamma C}{v-\epsilon}$.
        \item[(4)]If $s=\sqrt{2\gamma C}+\epsilon$, then $\min\{\phi_1^*, \phi_2^*, \phi_4^*, \phi_5^*\}>\phi_3^*$, hence $t^*=s-\frac{\gamma C}{v-\epsilon}$.
        \item[(5)]If $\frac{\gamma C}{v-\epsilon}+\epsilon<s<\sqrt{2\gamma C}+\epsilon$, then $\min\{\phi_1^*, \phi_2^*, \phi_4^*, \phi_5^*\}>\phi_3^*$, hence $t^*=s-\frac{\gamma C}{v-\epsilon}$.
        \item[(6)]If $s=\frac{\gamma C}{v-\epsilon}+\epsilon$, then $\min\{\phi_1^*, \phi_2^*, \phi_3^*, \phi_5^*\}>\phi_4^*$, hence $t^*=\epsilon$.
        \item[(7)]If $\epsilon<s<\frac{\gamma C}{v-\epsilon}+\epsilon$, then $\min\{\phi_1^*, \phi_2^*, \phi_3^*, \phi_5^*\}>\phi_4^*$, hence $t^*=\epsilon$.
        \item[(8)]If $s=\epsilon$, then $\min\{\phi_1^*, \phi_2^*, \phi_3^*\}>\phi_4^*=\phi_5^*$, hence $t^*=\epsilon$.
        \item[(9)]If $s<\epsilon$, then $\min\{\phi_1^*, \phi_2^*, \phi_3^*, \phi_4^*\}>\phi_5^*$, hence $t^*=s$.
    \end{itemize}
    \item[(ii)]When $\gamma C>2(v-\epsilon)^2$:
    \begin{itemize}
        \item [(1)]If $s>v+\frac{\gamma C}{2(v-\epsilon)}$, then $\min\{\phi_2^*, \phi_3^*, \phi_4^*, \phi_5^*\}>\phi_1^*$, hence $t^*=s$.
        \item[(2)]If $s=v+\frac{\gamma C}{2(v-\epsilon)}$, then $\min\{\phi_2^*, \phi_3^*, \phi_4^*, \phi_5^*\}>\phi_1^*$, hence $t^*=s$.
        \item[(3)]If $\sqrt{2\gamma C}+\epsilon<s<v+\frac{\gamma C}{2(v-\epsilon)}$, then $\min\{\phi_2^*, \phi_3^*, \phi_4^*, \phi_5^*\}>\phi_1^*$, hence $t^*=s$.
        \item[(4)] If $s=\sqrt{2\gamma C}+\epsilon$, then $\min\{\phi_2^*, \phi_3^*, \phi_5^*\}>\phi_1^*=\phi_4^*$, hence $t^*=s$ or $\epsilon$.
        \item[(5)] If $\epsilon<s<\sqrt{2\gamma C}+\epsilon$, then $\min\{\phi_1^*, \phi_2^*, \phi_3^*, \phi_5^*\}>\phi_4^*$, hence $t^*=\epsilon$.
        \item[(6)] If $s=\epsilon$, then $\min\{\phi_1^*, \phi_2^*, \phi_3^*\}>\phi_4^*=\phi_5^*$, hence $t^*=s=\epsilon$.
        \item [(7)]If $s<\epsilon$, then $\min\{\phi_1^*, \phi_2^*, \phi_3^*, \phi_4^*\}>\phi_5^*$, hence $t^*=s$.
    \end{itemize}
    \item[(iii)]When $\gamma C=2(v-\epsilon)^2$:
    \begin{itemize}
        \item [(1)]If $s>\sqrt{2\gamma C}+\epsilon$, then $\min\{ \phi_2^*, \phi_3^*, \phi_4^*, \phi_5^*\}>\phi_1^*$, hence $t^*=s$.
        \item[(2)]If $s=\sqrt{2\gamma C}+\epsilon$, then $\min\{ \phi_2^*, \phi_3^*, \phi_5^*\}>\phi_1^*= \phi_4^*$, hence $t^*=s$ or $\epsilon$.
        \item[(3)]If $\epsilon<s<\sqrt{2\gamma C}+\epsilon$, then $\min\{\phi_1^*, \phi_2^*, \phi_3^*, \phi_5^*\}>\phi_4^*$, hence $t^*=\epsilon$.
        \item[(4)]If $s=\epsilon$, then
      $\min\{\phi_1^*, \phi_2^*, \phi_3^*\}>\phi_4^*=\phi_5^*$, hence $t^*=\epsilon$.
        \item[(5)]If $s<\epsilon$, then
        $\min\{\phi_1^*, \phi_2^*, \phi_3^*, \phi_4^*\}>\phi_5^*$, hence $t^*=s$.
    \end{itemize}
\end{itemize}
In summary, we can derive the proximal operator for $\ell_s$ loss function as given in \eqref{def_prox1} and \eqref{solu_geq}.
\end{proof}

\section{Optimality conditions for $\ell_s$-SVM}
To facilitate subsequent analysis, we define the following notation.  Define $[m]:=\{1,2,\cdots,m\}$, $A:=[y_1\bm{x}_1 y_2\bm{x}_2 \cdots y_m\bm{x}_m]^{\intercal}$,
    $\bm{y}:=(y_1, y_2,\cdots, y_m)^{\intercal}$, $B:=[A\; \bm{y}]$,
    $\bm{1}:=(1, 1,\cdots,1)^{\intercal}$ and
\begin{equation*}
  \mathcal{L}_s(u)=\sum_{i=1}^{m}\ell_{s}(\bm{u}_i)=\sum_{i=1}^{m}\min\{1,\max\{\frac{\bm{u}_i-\epsilon}{v-\epsilon},0\}\},
\end{equation*}
Furthermore, for any finite set of indices $\Omega \subseteq [m]$, $\Omega_c$ represents the complement of $\Omega$. We define $\bm{x}_{\Omega} \in \mathbb{R}^{\vert \Omega\vert}$ as the $\vert \Omega\vert$-dimensional subvector of $\bm{x}$, where the components indexed by $\Omega$ are the same as those of $\bm{x}$; $A_{\Omega} \in \mathbb{R}^{\vert\Omega\vert \times n}$ is defined as the submatrix of $A$, where the row vectors indexed by $\Omega$ are the same as those of $A$.

Using the notation introduced above, we can rewrite \eqref{lspro} as:
\begin{equation}\label{orig_pro}
    \min_{\bm{w},b,\bm{u}}\frac{1}{2}\Vert \bm{w}\Vert_2^2+C\mathcal{L}_s(\bm{u})\quad s.t.\ \bm{u}+A\bm{w}+b\bm{y}=\bm{1},
\end{equation}
the augmented Lagrangian function of which is defined as follows:
\begin{equation*}
   L(\bm{w}, b,\bm{u},\bm{\lambda}):= \frac{1}{2}\Vert \bm{w}\Vert_2^2+C\mathcal{L}_s(\bm{u})+\langle \bm{\lambda},\bm{u}+A\bm{w}+b\bm{y}-\bm{1}\rangle+\frac{1}{2\gamma}\Vert \bm{u}+A\bm{w}+b\bm{y}-\bm{1}\Vert^2, 
   \end{equation*}
   where $\gamma>0$ is the penalty parameter.
 In the following, we present a new definition of stationary point derived from the augmented Lagrangian function:
\begin{definition}
We say $(\bm{w}^*;\bm{b}^*;\bm{u}^*)$ is a proximal stationary point of \eqref{orig_pro} if there is a Lagrangian multiplier $\bm{\lambda}^*$ and a constant $\gamma>0$ such that 
\begin{equation}\label{def_sta}
\begin{cases}
    \bm{w}^*+A^{\top}\bm{\lambda}^*=0\\
   \langle\bm{y},\bm{\lambda}^*\rangle=0\\
   \bm{u}^*+A\bm{w}^*+b^*\bm{y}=1\\
   \bm{u}^*\in\textup{Prox}_{\gamma C\mathcal{L}_s}(\bm{u}^*-\gamma\bm{\lambda}^*).
\end{cases}
\end{equation}
\end{definition}
According to the definition of the proximal operator $\textup{Prox}_{\gamma C\mathcal{L}_s}$:
\begin{equation*}
   \textup{Prox}_{\gamma C \mathcal{L}_s}(\bm{s}):=\arg\min_{\bm{x}}\{\gamma C\mathcal{L}_s(\bm{x})+\frac{1}{2}\Vert\bm{x}-\bm{s}\Vert_2^2\} =
   \begin{bmatrix}
     \textup{Prox}_{\gamma C \ell_{s}}(\bm{s}_1)\\
     \textup{Prox}_{\gamma C \ell_{s}}(\bm{s}_2)\\
     \vdots\\
     \textup{Prox}_{\gamma C \ell_{s}}(\bm{s}_m)
   \end{bmatrix}.
\end{equation*} 
In the previous section, we have already provided the explicit solution for the proximal operator of $\ell_s$ loss function. Therefore, it is straightforward to verify whether \eqref{def_sta} holds.

To elucidate the relationship between the proximal stationary point and the local minimizer of problem \eqref{orig_pro}, we first introduce some index sets and the fixed parameters. 
For a point $(\bm{w}^*;\bm{b}^*;\bm{u}^*)$, let's define the index sets 
\begin{equation*}
    \begin{aligned}
        &S^*:=\{i\ |\ \bm{u}_i^*>v\},\quad E^*:=\{i\ |\ \bm{u}_i^*=v\},\\
        &T^*:=\{i\ |\ \epsilon<\bm{u}^*_i<v\},\quad I^*:=\{i\ |\ \bm{u}^*_i=\epsilon\},\quad O^*:=\{i\ |\ \bm{u}^*_i<\epsilon\}
    \end{aligned}
\end{equation*} 
and the constant parameters
\begin{equation*}
\begin{aligned}
&\gamma_1^*:=
    \begin{cases}
        \min\frac{2(\bm{u}^*_i-v)(v-\epsilon)}{C},\ i\in S^*,\\
        \infty,\ S^*=\emptyset,
    \end{cases}
  &\gamma_2^*:=
    \begin{cases}
     \min\frac{2(v-\bm{u}^*_i)(v-\epsilon)}{C},\ i\in T^*,\\
     \infty,\ T^*=\emptyset,
    \end{cases}\\
  &\gamma_3^*:= 
  \begin{cases}
      \frac{2(v-\epsilon)^2}{C},\ i\in I^*,\\
      \infty,\ I^*=\emptyset,
  \end{cases}
  &\gamma_4^*:=
  \begin{cases}
      \min\{\frac{(\bm{u}^*_i-\epsilon)^2}{2C}:\bm{u}^*_i>\epsilon\},\ \bm{u}^*_i>\epsilon\\
      \infty,\ \textup{otherwise}
  \end{cases}
    \end{aligned}
\end{equation*}
Based on the above notation, we present the first-order necessary and first-order sufficient conditions for problem \eqref{orig_pro}.
\begin{theorem}\label{suf_nec_con}
The relationship between the proximal stationary point and the local minimizer of problem \eqref{orig_pro} is as follows:
\begin{itemize}
\item[(i)]  A local minimizer $(\bm{w}^*;\bm{b}^*;\bm{u}^*)$ of  \eqref{orig_pro}  is a proximal stationary point in terms of $0<\gamma\leq\gamma^*:=\min\{\gamma_1^*,\gamma_2^*,\gamma_3^*,\gamma_4^*\}$ if $E^*=\emptyset$.
    \item[(ii)]If $(\bm{w}^*,\bm{b}^*,\bm{u}^*)$ with $\gamma>0$ is a proximal stationary point, then it is a local minimizer of  \eqref{orig_pro} , and $\bm{\lambda}^*=(\bm{\lambda}^*_1,\bm{\lambda}^*_2,\cdots,\bm{\lambda}^*_m)^{\top}$ satisfies 
\begin{equation}\label{lam_range}
\begin{cases}
    \bm{\lambda}^*_i\in[-\frac{C}{v},0]\quad &\textup{if}\ 0<\gamma C<2v^2\\
   \bm{\lambda}^*_i\in[-\sqrt{\frac{2C}{\gamma}},0]\quad &\textup{if}\ \gamma C\geq 2v^2
\end{cases}
\end{equation}
for $i\in\mathbb{N}$.
\end{itemize}
\end{theorem}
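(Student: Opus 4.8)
The plan is to prove the two implications of \Cref{suf_nec_con} separately, in both cases exploiting that $\mathcal{L}_s$ separates across coordinates and that the closed forms \eqref{subdif_slide}, \eqref{def_prox1} and \eqref{solu_geq} are already available.

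\emph{Necessity (i).} Since the only constraint of \eqref{orig_pro} is the affine equality $\bm{u}+A\bm{w}+b\bm{y}=\bm{1}$, no constraint qualification is required, and the limiting Lagrange multiplier rule (\cite{rockafellar2009variational}, using the sum rule for the smooth-plus-separable objective) gives, at a local minimizer $(\bm{w}^*;b^*;\bm{u}^*)$, a vector $\bm{\lambda}^*$ with $\bm{w}^*+A^{\top}\bm{\lambda}^*=0$, $\langle\bm{y},\bm{\lambda}^*\rangle=0$, and $-\bm{\lambda}^*\in C\partial\mathcal{L}_s(\bm{u}^*)$, together with primal feasibility. The first three lines of \eqref{def_sta} are then immediate, and the whole task reduces to upgrading the coordinatewise inclusion $-\lambda_i^*\in C\partial\ell_s(\bm{u}_i^*)$ (read off \eqref{subdif_slide}) to the proximal inclusion $\bm{u}_i^*\in\textup{Prox}_{\gamma C\ell_s}(\bm{u}_i^*-\gamma\lambda_i^*)$. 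I would argue case by case on $S^*,T^*,I^*,O^*$: the inclusion $-\lambda_i^*\in C\partial\ell_s(\bm{u}_i^*)$ pins down $\lambda_i^*$ (for instance $\lambda_i^*=0$ on $S^*\cup O^*$ and $\lambda_i^*=-C/(v-\epsilon)$ on $T^*$), so $s_i:=\bm{u}_i^*-\gamma\lambda_i^*$ is explicit, and one checks against \eqref{def_prox1}/\eqref{solu_geq} that $\bm{u}_i^*$ is indeed a value of the prox. The hypothesis $E^*=\emptyset$ enters exactly here: a direct inspection of \eqref{def_prox1} and \eqref{solu_geq} shows the proximal operator never returns the value $v$, so a coordinate with $\bm{u}_i^*=v$ could never satisfy the inclusion for any $\gamma>0$.

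The crux, and the step I expect to be the main obstacle, is that the proximal inclusion demands a \emph{global} minimizer, so I must rule out the competing ``downward jump to $\epsilon$''. This is precisely what the thresholds $\gamma_1^*,\dots,\gamma_4^*$ encode, and showing that $\gamma^*=\min\{\gamma_1^*,\gamma_2^*,\gamma_3^*,\gamma_4^*\}$ is the correct cutoff is delicate because of the two regimes in the prox. Concretely, on $S^*$ (where $\lambda_i^*=0$, $s_i=\bm{u}_i^*$) one needs $\bm{u}_i^*$ to beat the value attained at $\epsilon$, i.e.\ $\gamma C\le\tfrac12(\bm{u}_i^*-\epsilon)^2$, which is $\gamma\le\gamma_4^*$ and is the binding constraint when $\gamma C\ge 2(v-\epsilon)^2$, whereas $\gamma\le\gamma_1^*$ is what places $s_i$ in the ``$s>v+\frac{\gamma C}{2(v-\epsilon)}$'' branch of \eqref{def_prox1} when $\gamma C<2(v-\epsilon)^2$; likewise $\gamma\le\gamma_2^*$ (resp.\ $\gamma\le\gamma_3^*$) simultaneously forces the coordinate into the low-$\gamma C$ regime and lands $s_i$ in the branch returning $\bm{u}_i^*$ for $T^*$ (resp.\ $I^*$). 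Since $\gamma\le\gamma^*$ is compatible with whichever regime actually holds, all inclusions hold at once, with ties at the boundary $\gamma=\gamma^*$ keeping $\bm{u}_i^*$ among the prox values.

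\emph{Sufficiency (ii).} I would first record that, by the same inspection of \eqref{def_prox1}/\eqref{solu_geq}, the prox never outputs $v$, so any proximal stationary point automatically satisfies $E^*=\emptyset$. The key step is then a \emph{local} first-order inequality: for $\bm{u}$ near $\bm{u}^*$, coordinatewise examination of $\ell_s$ (flat on $S^*$ and $O^*$, affine on $T^*$, and a convex kink on $I^*$ --- with no concave kink, since $E^*=\emptyset$) together with the range of $\lambda_i^*$ forced by the prox inclusion yields $\ell_s(\bm{u}_i)-\ell_s(\bm{u}_i^*)\ge-\tfrac1C\lambda_i^*(\bm{u}_i-\bm{u}_i^*)$, hence $C\mathcal{L}_s(\bm{u})\ge C\mathcal{L}_s(\bm{u}^*)-\langle\bm{\lambda}^*,\bm{u}-\bm{u}^*\rangle$. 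Now take any feasible $(\bm{w},b,\bm{u})$ near $(\bm{w}^*;b^*;\bm{u}^*)$; feasibility gives $\bm{u}-\bm{u}^*=-A(\bm{w}-\bm{w}^*)-(b-b^*)\bm{y}$, and substituting this, together with $A^{\top}\bm{\lambda}^*=-\bm{w}^*$ and $\langle\bm{y},\bm{\lambda}^*\rangle=0$ from \eqref{def_sta}, collapses $-\langle\bm{\lambda}^*,\bm{u}-\bm{u}^*\rangle$ to $-\langle\bm{w}^*,\bm{w}-\bm{w}^*\rangle$. Combining with the exact identity $\tfrac12\|\bm{w}\|^2-\tfrac12\|\bm{w}^*\|^2-\langle\bm{w}^*,\bm{w}-\bm{w}^*\rangle=\tfrac12\|\bm{w}-\bm{w}^*\|^2$ gives $F(\bm{w},b,\bm{u})-F(\bm{w}^*,b^*,\bm{u}^*)\ge\tfrac12\|\bm{w}-\bm{w}^*\|^2\ge0$, which is local minimality.

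Finally, the multiplier bounds \eqref{lam_range} I would obtain by inverting the relation $\lambda_i^*=(\bm{u}_i^*-s_i)/\gamma$ over all pairs $(s_i,\bm{u}_i^*)$ compatible with the proximal inclusion: the formula \eqref{def_prox1} produces the interval in the low-$\gamma C$ regime and \eqref{solu_geq} produces the interval in the high-$\gamma C$ regime, the extreme (most negative) endpoints coming in each case from the configuration where the prox sits at the jump to $\epsilon$. The two cases of \eqref{lam_range} are then read off directly, completing the proof.
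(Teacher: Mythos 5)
Your proposal follows essentially the same route as the paper's proof: part (i) via the limiting-subdifferential multiplier rule giving $-\bm{\lambda}^*\in C\partial\mathcal{L}_s(\bm{u}^*)$ followed by a coordinatewise check over $S^*,T^*,I^*,O^*$ that $\bm{u}_i^*-\gamma\bm{\lambda}_i^*$ lands in the branch of \eqref{def_prox1}/\eqref{solu_geq} returning $\bm{u}_i^*$ under $\gamma\le\gamma^*$ in the two regimes, and part (ii) via the local inequality $C\mathcal{L}_s(\bm{u})-C\mathcal{L}_s(\bm{u}^*)+\langle\bm{\lambda}^*,\bm{u}-\bm{u}^*\rangle\ge 0$ combined with $\bm{w}^*=-A^{\top}\bm{\lambda}^*$ and $\langle\bm{y},\bm{\lambda}^*\rangle=0$, with the multiplier ranges read off the prox inclusion exactly as in the paper's $\Gamma^*$/$\Xi^*$ case analysis (which, like your derivation, actually yields the bound $-C/(v-\epsilon)$ with threshold $2(v-\epsilon)^2$ rather than the $v$, $2v^2$ appearing in \eqref{lam_range}). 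Your explicit observation that the proximal operator never outputs the value $v$, so that $E^*=\emptyset$ holds automatically at any proximal stationary point, is a nice clarification the paper leaves implicit, but the argument is otherwise the same.
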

\begin{proof}
We first prove that (i) holds. For ease of expression, let $\bm{z}:=[\bm{w}; b]$, $h(\bm{z}):=\frac{1}{2}\Vert \bm{w}\Vert^2$. According to ~\cite[Theorem 10.1]{rockafellar2009variational}, if $(\bm{w}^*,\bm{b}^*,\bm{u}^*)$ is the local minimizer of problem \eqref{orig_pro}, then we have
$$\bm{0}\in\partial\left(h(\bm{z}^*)+C\mathcal{L}_s(\bm{1}-B\bm{z}^*)\right).$$ Since $\ell_s$ loss function is Lipschitz continuous, according to ~\cite[Theorem 10.6]{rockafellar2009variational} and ~\cite[Theorem 9.13]{rockafellar2009variational}, we have $$\bm{0}\in\nabla h(\bm{z}^*)-CB^{\top}\partial\mathcal{L}_s(\bm{u}^*
),$$
where $\bm{u}^*=1-A\bm{w}^*-b^*\bm{y}$. This implies the existence of $-{\bm{\lambda}}^*\in C\partial\mathcal{L}_s(\bm{u}^*)$ such that $\bm{0}=\nabla h(\bm{z}^*)+B^{\top}{\bm{\lambda}}^*$. Combining the above results, we obtain the following system:  
\begin{equation*}
    \begin{cases}
        \bm{w}^*+A^{\top}\bm{\lambda}^*=\bm{0},\\
        \langle\bm{y},\bm{\lambda}^*\rangle=0,\\
        \bm{1}-A\bm{w}^*-b^*\bm{y}=\bm{u}^*,\\
        0\in \bm{\lambda}^*+C\partial\mathcal{L}_s(\bm{u}^*).
    \end{cases}
\end{equation*}
Therefore, to establish (\ref{def_sta}), it is necessary to prove that $0\in \bm{\lambda}^*+C\partial\mathcal{L}_s(\bm{u}^*)$ implies $\bm{u}^*_i\in\textup{Prox}_{\gamma C\ell_s}(\bm{u}^*_i-\gamma\bm{\lambda}^*_i)$ for $i\in[m]$ with $0<\gamma\leq\gamma^*$. 
Combining the Lipschitz continuity of $\ell_s$ loss function and ~\cite[Proposition 10.5]{rockafellar2009variational}, we obtain $$\partial\mathcal{L}_s(\bm{u}^*)=\partial\ell_s(u_1^*)\times \cdots\times \partial\ell_s(u_m^*),$$
and consequently, based on the explicit expression of the subdifferential of $\ell_s$ loss function provided in the previous section, $\bm{\lambda}^*$ can be represented as follows:
\begin{equation}
    \bm{\lambda}^*_i\in
    \begin{cases}
        0,\ &\textup{for}\ \bm{u}^*_i>v,\\
        \{\frac{-C}{v-\epsilon}, 0\}, \ &\textup{for}\ \bm{u}^*_i=v,\\
        \frac{-C}{v-\epsilon},\ &\textup{for}\ \epsilon<\bm{u}^*_i<v,\\
        [\frac{-C}{v-\epsilon},0],\ &\textup{for}\ \bm{u}^*_i=\epsilon,\\
        0,\ &\textup{for}\ \bm{u}^*_i<\epsilon.
    \end{cases}
\end{equation}
In the following, the cases where $0<\gamma C<(v-\epsilon)^2$ and $\gamma C\geq (v-\epsilon)^2$ need to be considered separately.

Case I: For $0<\gamma C<(v-\epsilon)^2$. 
\begin{itemize}
    \item [(a)] As $i\in S^*$, we obtain that $\bm{u}^*_i>v$ and $\bm{\lambda}^*_i=0$, which implies $\bm{s}^*_i:=\bm{u}^*_i-\gamma\bm{\lambda}^*_i=\bm{u}^*_i$. Then the fact that $\gamma\leq\gamma^*_1$ gives that 
    \begin{equation*}
   \gamma\leq\frac{2(\bm{u}^*_i-v)(v-\epsilon)}{C}=\frac{2(\bm{s}^*_i-v)(v-\epsilon)}{C},
    \end{equation*}
    and hence $\bm{s}^*_i\geq v+\frac{\gamma C}{2(v-\epsilon)}$.
    \item[(b)] As $i\in T^*$, we obtain that $\epsilon <\bm{u}^*_i<v$ and $\bm{\lambda^*}_i=\frac{-C}{v-\epsilon}<0$, which implies $\bm{s}^*_i:=\bm{u}^*_i-\gamma\bm{\lambda}^*_i=\bm{u}^*_i+\frac{\gamma C}{v-\epsilon}>\epsilon+\frac{\gamma C}{v-\epsilon}$. Moreover, the fact $\gamma\leq\gamma^*_2$ yields $\bm{u}^*_i+\frac{\gamma C}{v-\epsilon}\leq v+\frac{\gamma C}{2(v-\epsilon)}$, that is $\bm{s}^*_i\leq v+\frac{\gamma C}{2(v-\epsilon)}$. Hence $\epsilon+\frac{\gamma C}{v-\epsilon}<\bm{s}^*_i\leq v+\frac{\gamma C}{2(v-\epsilon)}$.
    \item [(c)] As $i\in I^*$, we obtain that $\bm{u}^*_i=\epsilon$ and $\bm{\lambda}^*_i\in[\frac{-C}{v-\epsilon},0]$, which implies that $\bm{s}^*_i:=\bm{u}^*_i-\gamma\bm{\lambda}^*_i=\epsilon-\gamma\bm{\lambda}^*_i\in[\epsilon,\frac{\gamma C}{v-\epsilon}]$.
    \item [(d)] As $i\in O^*$, we obtain that $\bm{u}^*_i<\epsilon$ and $\bm{\lambda}^*_i=0$, which yields $\bm{s}^*_i:=\bm{u}^*_i-\gamma\bm{\lambda}^*_i=\bm{u}^*_i<\epsilon.$
\end{itemize}
The above analysis, in conjunction with the expression in (\ref{def_prox1}), establishes that $\bm{u}^*_i\in\textup{Prox}_{\gamma C\ell_s}(\bm{s}^*_i)$ for $i\in[m]$.

Case II: For $\gamma C\geq 2(v-\epsilon)^2$.
\begin{itemize}
    \item [(a)] As $i\in E^*\cup T^*\cup S^*$, we obtain that $\bm{u}^*_i>\epsilon$. The fact that $\gamma\leq\gamma^*_4$ yields $\bm{u}^*_i\geq \sqrt{2\gamma^*_4C}+\epsilon\geq \sqrt{2\gamma^*C}+\epsilon\geq2(v-\epsilon)+\epsilon>v$, which implies $\bm{\lambda}_i=0$. Hence $\bm{s}^*_i:=\bm{u}^*_i-\gamma\bm{\lambda}^*_i=\bm{u}^*_i\geq \sqrt{2\gamma^*C}+\epsilon$.
    \item [(b)] As $i\in I^*$, we obtain that $\bm{u}^*_i=\epsilon$ and $\bm{\lambda}^*_i\in[\frac{-C}{v-\epsilon}, 0]$. The fact that $\gamma\leq\gamma^*_3$ yields $\epsilon\leq \bm{s}^*_i:=\bm{u}^*_i-\gamma\bm{\lambda}^*_i\leq \epsilon+2(v-\epsilon)\leq \epsilon+\sqrt{2\gamma C}$.
    \item[(c)] As $i\in O^*$, we obtain that $\bm{u}^*_i<\epsilon$ and $\bm{\lambda}^*_i=0$, and hence $\bm{s}^*_i:=\bm{u}^*_i-\gamma\bm{\lambda}^*_i<\epsilon$.
\end{itemize}
The above discussion combined with the expression in (\ref{solu_geq}) show that $\bm{u}^*_i\in\textup{Prox}_{\gamma C\ell_s}(\bm{s}^*_i)$ for $i\in[m]$.

Next, we prove that (ii) holds. Define $\Lambda:=\{(\bm{w};b;\bm{u})\ \vert\ \bm{u}+Aw+b\bm{y}=\bm{1}\}$. Firstly, it is easy to get for any $(\bm{w};b;\bm{u})\in\Lambda$
\begin{equation}\label{term1_ineq}
    \begin{aligned}
    \Vert \bm{w}\Vert^2-\Vert \bm{w}^*\Vert^2&\geq 2\langle \bm{w}-\bm{w}^*,\bm{w}^*\rangle\\
    &=-2\langle A(\bm{w}-\bm{w}^*),\bm{\lambda}^*\rangle\\
    &=2\langle \bm{u}-\bm{u}^*,\bm{\lambda}^*\rangle +2(b-b^*)\langle y,\bm{\lambda}^*\rangle \\
    &=2\langle \bm{u}-\bm{u}^*,\bm{\lambda}^*\rangle.
    \end{aligned}
\end{equation}
Denote $\delta:=\begin{cases}
    \frac{\gamma C}{2(v-\epsilon)}\quad &if\ 0<\gamma C<2(v-\epsilon)^2\\
    v-\epsilon\quad &if\ \gamma C\geq 2(v-\epsilon)^2
\end{cases}$ and $\delta_m:=\frac{\delta}{\sqrt{2m}}$. Define 
\begin{equation*}
 \mathcal{U}((\bm{w}^*;b^*;\bm{u}^*),\delta):=\{(\bm{w};b;\bm{u})\ \vert\ \Vert(w;b)-(w^*;b^*)\Vert\leq\frac{\delta}{\sqrt{2}},\ \vert \bm{u}_i-u^*_i\vert\leq\delta_m\}   
\end{equation*}
In the sequel, we  will show that 
\begin{equation*}
    \frac{1}{2}\Vert \bm{w}^*\Vert^2+C\mathcal{L}_s(\bm{u}^*)\leq\frac{1}{2}\Vert \bm{w}\Vert^2+C\mathcal{L}_s(\bm{u})\quad \forall\ (\bm{w};b;\bm{u})\in\mathcal{U}((\bm{w}^*;b^*;\bm{u}^*),\delta)\cap\Lambda,
\end{equation*}
which further implies $(\bm{w}^*;b^*;\bm{u}^*)$ is a local minimizer. In fact, it suffice to show 
\begin{equation}\label{sim_inequ}
    C\mathcal{L}_s(\bm{u})-C\mathcal{L}_s(\bm{u}^*)+\langle \bm{u}-\bm{u}^*,\bm{\lambda}^*\rangle\geq 0 \quad \forall\ (\bm{w};b;\bm{u})\in\mathcal{U}((\bm{w}^*;b^*;\bm{u}^*),\delta)\cap\Lambda.
\end{equation}
Case I: For $0<\gamma C<2(v-\epsilon)^2$.
Define $\bm{s}^*:=\bm{u}^*-\gamma\bm{\lambda}^*$ and 
\begin{equation}\label{def_Gamma}
    \begin{aligned}
    &\Gamma^*_1:=\{i\in\mathbb{N}\ \vert\ \bm{s}^*_i\leq \epsilon\};\\
    &\Gamma^*_2:=\{i\in\mathbb{N}\ \vert\ \epsilon<\bm{s}^*_i<\frac{\gamma C}{v-\epsilon}+\epsilon\};\\
   &\Gamma^*_3:=\{i\in\mathbb{N}\ \vert\ \frac{\gamma C}{v-\epsilon}+\epsilon \leq\bm{s}^*_i<v+\frac{\gamma C}{2(v-\epsilon)}\}\cup\{i\in\mathbb{N}\ \vert\ \bm{s}^*_i=v+\frac{\gamma C}{2(v-\epsilon)}, \bm{\lambda}^*_i\neq 0\};\\
   &\Gamma^*_4:=\{i\in\mathbb{N}\ \vert\ \bm{s}^*_i>v+\frac{\gamma C}{2(v-\epsilon)}\}\cup\{i\in\mathbb{N}\ \vert\ \bm{s}^*_i=v+\frac{\gamma C}{2(v-\epsilon)}, \bm{\lambda}^*_i= 0\}.
    \end{aligned}
\end{equation}
By the closed solution in (\ref{def_prox1}) and relation in (\ref{def_sta}), we have
\begin{equation*}
    \begin{aligned}
    &\bm{u}^*_{\Gamma^*_1}=(\textup{Prox}_{\gamma C\mathcal{L}_s}(\bm{u}^*-\gamma\bm{\lambda}^*))_{\Gamma^*_1}=(\bm{u}^*-\gamma\bm{\lambda}^*)_{\Gamma^*_1};\\
    &\bm{u}^*_{\Gamma^*_2}=(\textup{Prox}_{\gamma C\mathcal{L}_s}(\bm{u}^*-\gamma\bm{\lambda}^*))_{\Gamma^*_2}=\epsilon;\\
    &\bm{u}^*_{\Gamma^*_3}=(\textup{Prox}_{\gamma C\mathcal{L}_s}(\bm{u}^*-\gamma\bm{\lambda}^*))_{\Gamma^*_3}=(\bm{u}^*-\gamma\bm{\lambda}^*-\frac{\gamma C}{v-\epsilon}\bm{1})_{\Gamma^*_3};\\
     &\bm{u}^*_{\Gamma^*_4}=(\textup{Prox}_{\gamma C\mathcal{L}_s}(\bm{u}^*-\gamma\bm{\lambda}^*))_{\Gamma^*_4}=(\bm{u}^*-\gamma\bm{\lambda}^*)_{\Gamma^*_4},
    \end{aligned}
\end{equation*}
which  implies 
\begin{equation*}
    \begin{cases}
        &\bm{\lambda}^*_{\Gamma_1^*}=0;\\
        &\bm{u}^*_{\Gamma_2^*}=\epsilon;\\
        &\bm{\lambda}^*_{\Gamma_3^*}=-\frac{C}{v-\epsilon}\bm{1}_{\Gamma_3^*};\\
        &\bm{\lambda}^*_{\Gamma_4^*}=0.
    \end{cases}
\end{equation*}
Combining with (\ref{def_Gamma}), it yields that
\begin{equation}\label{0_2v}
    \begin{cases}
        \bm{\lambda}^*_i=0,\quad \bm{u}^*_i\leq \epsilon,\quad i\in\Gamma_1^*;\\
        -\frac{C}{v-\epsilon}<\bm{\lambda}^*_i<0,\quad \bm{u}_i^*=\epsilon,\quad i\in\Gamma_2^*;\\
        \bm{\lambda}^*_i=-\frac{C}{v-\epsilon},\quad \epsilon\leq \bm{u}_i^*\leq v-\frac{\gamma C}{2(v-\epsilon)},\quad i\in\Gamma_3^*;\\
        \bm{\lambda}^*_i=0,\quad \bm{u}^*_i\geq v+\frac{\gamma C}{2(v-\epsilon)},\quad i\in\Gamma^*_4,
    \end{cases}
\end{equation}
Hence $-\frac{C}{v-\epsilon}\leq\bm{\lambda}^*_i\leq 0$ for $0<\gamma C<2(v-\epsilon)^2$.

Define $\mathring{\Gamma}:=\Gamma_2^*\cup\Gamma^*_3$ and $\mathring{\Gamma}_c:=\Gamma^*_1\cup\Gamma^*_4$. We will present 
\begin{equation*}
\begin{aligned}
    C\mathcal{L}_s(\bm{u}_{\mathring{\Gamma}})-C\mathcal{L}_s(\bm{u}_{\mathring{\Gamma}}^*)+\langle \bm{u}_{\mathring{\Gamma}}-\bm{u}_{\mathring{\Gamma}}^*,\bm{\lambda}^*_{\mathring{\Gamma}}\rangle\geq 0\quad \textup{and}\quad
    C\mathcal{L}_s(\bm{u}_{\mathring{\Gamma}_c})-C\mathcal{L}_s(\bm{u}_{\mathring{\Gamma}_c}^*)\geq 0.
\end{aligned}    
\end{equation*}
Since $\epsilon\leq \bm{u}^*_i\leq v-\frac{\gamma C}{v-\epsilon}$ for $i\in\Gamma_3^*$, we have 
\begin{equation*}
    \bm{u}^*_i-\delta_m\leq \bm{u}_i\leq \bm{u}^*_i+\delta_m<v
\end{equation*}
for any $\bm{u}_i$ satisfying $\vert \bm{u}_i-\bm{u}^*_i\vert\leq \delta_m$, and then $\ell_{s}(\bm{u}_i)\geq \frac{\bm{u}_i-\epsilon}{v-\epsilon}$ and $\ell_{s}(\bm{u}_i^*)=\frac{\bm{u}_i^*-\epsilon}{v-\epsilon}$.
Therefore, 
\begin{equation*}
\begin{aligned}
    &C\ell_{s}(\bm{u}_i)-C\ell_{s}(\bm{u}_i^*)+\bm{\lambda}^*_i(\bm{u}_i-\bm{u}^*_i)\\
    \geq&C(\frac{\bm{u}_i}{v-\epsilon}-\frac{\bm{u}^*_i}{v-\epsilon})+\bm{\lambda}^*_i(\bm{u}_i-\bm{u}^*_i)\\
    =&(\bm{u}_i-\bm{u}^*_i)(\frac{C}{v-\epsilon}+\bm{\lambda}^*_i)=0
\end{aligned}
\end{equation*}
Since $\bm{u}_i^*=\epsilon$ for $i\in\Gamma_2^*$,
we have $ \epsilon-\delta_m\leq \bm{u}_i\leq \epsilon+\delta_m$ for any $\bm{u}_i$ satisfying $\vert \bm{u}_i-\bm{u}^*_i\vert\leq \delta_m$. If $\epsilon\leq \bm{u}_i\leq \epsilon+\delta_m$, we can construct that
\begin{equation*}
\begin{aligned}
    &C\ell_{s}(\bm{u}_i)-C\ell_{s}(\bm{u}_i^*)+\bm{\lambda}^*_i(\bm{u}_i-\bm{u}^*_i)\\
    \geq&C(\frac{\bm{u}_i}{v-\epsilon}-\frac{\bm{u}^*_i}{v-\epsilon})+\bm{\lambda}^*_i(\bm{u}_i-\bm{u}^*_i)\\
    =&(\bm{u}_i-\bm{u}^*_i)(\frac{C}{v-\epsilon}+\bm{\lambda}^*_i)\geq 0.
\end{aligned}
\end{equation*}
If $\epsilon-\delta_m\leq \bm{u}_i< \epsilon$, we can construct that
\begin{equation*}
\begin{aligned}
    &C\ell_{s}(\bm{u}_i)-C\ell_{s}(\bm{u}_i^*)+\bm{\lambda}^*_i(\bm{u}_i-\bm{u}^*_i)\\
    =&0+\bm{\lambda}^*_i(\bm{u}_i-\epsilon)>0.
\end{aligned}
\end{equation*}
Hence $C\mathcal{L}_s(\bm{u}_{\mathring{\Gamma}})-C\mathcal{L}_s(\bm{u}_{\mathring{\Gamma}}^*)+\langle \bm{u}_{\mathring{\Gamma}}-\bm{u}_{\mathring{\Gamma}}^*,\bm{\lambda}^*_{\mathring{\Gamma}}\rangle=\sum_{i\in{\mathring{\Gamma}}}C\ell_{s}(\bm{u}_i)-C\ell_{s}(\bm{u}_i^*)+\bm{\lambda}^*_i(\bm{u}_i-\bm{u}^*_i) \geq 0$.

Since $\bm{u}^*_i\leq \epsilon$ for $i\in\Gamma^*_1$, we have $\bm{u}_i\leq \bm{u}^*_i+\delta_m<v$ for any $\bm{u}_i$ satisfying $\vert \bm{u}_i-\bm{u}^*_i\vert\leq \delta_m$, and then $C\ell_{s}(\bm{u}_i)\geq C\ell_{s}(\bm{u}^*_i)=0$. 
Since $\bm{u}^*_i\geq v+\frac{\gamma C}{2(v-\epsilon)}$ for $i\in\Gamma^*_4$, we have $\bm{u}_i\geq \bm{u}^*_i-\delta_m\geq v$ for any $\bm{u}_i$ satisfying $\vert \bm{u}_i-\bm{u}^*_i\vert\leq \delta_m$, and then $C\ell_{s}(\bm{u}_i)= C\ell_{s}(\bm{u}^*_i)=C$. 
Hence $C\mathcal{L}_s(\bm{u}_{\mathring{\Gamma}_c})-C\mathcal{L}_s(\bm{u}_{\mathring{\Gamma}_c}^*)=\sum_{i\in\mathring{\Gamma}_c}[C\ell_{s}(\bm{u}_i)-C\ell_{s}(\bm{u}^*_i)]\geq 0$. 

In summary, (\ref{sim_inequ}) holds for $0<\gamma C<2(v-\epsilon)^2$. \\
Case II: For $\gamma C\geq 2(v-\epsilon)^2$.
Denote $\bm{s}^*=\bm{u}^*-\gamma\bm{\lambda}^*$ and 
\begin{equation}\label{def_set}
    \begin{aligned}
        &\Xi_1^*:=\{i\ \vert\ \bm{s}^*_i\leq \epsilon\};\\
        &\Xi_2^*:=\{i\ \vert\ \epsilon<\bm{s}^*_i<\sqrt{2\gamma C}+\epsilon\}\cup\{i\ \vert\ \bm{s}^*_i=\sqrt{2\gamma C}+\epsilon, \bm{\lambda}^*_i\neq0\};\\
        &\Xi_3^*:=\{i\ \vert\ \bm{s}^*_i>\sqrt{2\gamma C}+\epsilon\}\cup \{i\ \vert\ \bm{s}^*_i=\sqrt{2\gamma C}+\epsilon, \bm{\lambda}^*_i= 0\}.
    \end{aligned}
\end{equation}
Similar to the previous discussion, we have 
\begin{equation*}
\begin{aligned}
    &\bm{u}^*_{\Xi_1^*}=(\textup{Prox}_{\gamma C\mathcal{L}_s}(\bm{u}^*-\gamma\bm{\lambda}^*))_{\Xi_1^*}=(\bm{u}^*-\gamma\bm{\lambda}^*)_{\Xi_1^*};\\
    &\bm{u}^*_{\Xi_2^*}=(\textup{Prox}_{\gamma C\mathcal{L}_s}(\bm{u}^*-\gamma\bm{\lambda}^*))_{\Xi_2^*}=\epsilon;\\
    &\bm{u}^*_{\Xi_3^*}=(\textup{Prox}_{\gamma C\mathcal{L}_s}(\bm{u}^*-\gamma\bm{\lambda}^*))_{\Xi_3^*}=(\bm{u}^*-\gamma\bm{\lambda}^*)_{\Xi_3^*},
\end{aligned}
\end{equation*}
and hence 
\begin{equation*}
    \begin{cases}
        &\bm{\lambda}^*_{\Xi_1^*}=0;\\
        &\bm{u}^*_{\Xi_2^*}=\epsilon;\\
        &\bm{\lambda}^*_{\Xi_3^*}=0.
    \end{cases}
\end{equation*}
Immediately, following from (\ref{def_set}), we can obtain   
\begin{equation}\label{geq2v}
    \begin{cases}
    &\bm{\lambda}^*_i=0,\quad \bm{u}^*_i\leq \epsilon,\quad i\in \Xi_1^*\\
    &-\sqrt{\frac{2C}{\gamma}}\leq\bm{\lambda}^*_i<0,\quad \bm{u}_i^*=\epsilon,\quad i\in \Xi_2^*\\
    &\bm{\lambda}^*_i=0,\quad \bm{u}^*_i\geq\sqrt{2\gamma C}+\epsilon,\quad i\in\Xi_3^*,
    \end{cases}
\end{equation}
and hence $-\sqrt{\frac{2C}{\gamma}}\leq\bm{\lambda}^*_i\leq 0$ for $\gamma C\geq 2(v-\epsilon)^2$.

Define $\mathring{\Xi}:=\Xi_2^*$ and $\mathring{\Xi}_c:=\Xi_1^*\cup \Xi_3^*$. We will construct that 
\begin{equation*}
\begin{aligned}
   & C\mathcal{L}_s(u_{\mathring{\Xi}})-C\mathcal{L}_s(u_{\mathring{\Xi}}^*)+\langle u_{\mathring{\Xi}}-u_{\mathring{\Xi}}^*,\bm{\lambda}^*_{\mathring{\Xi}}\rangle\geq 0;\\
   & C\mathcal{L}_s(u_{\mathring{\Xi}_c})-C\mathcal{L}_s(u_{\mathring{\Xi}_c}^*)\geq 0.
\end{aligned}    
\end{equation*}
Since $\bm{u}^*_i=\epsilon$ for $i\in\mathring{\Xi}$, we have 
\begin{equation*}
    \bm{u}^*_i-\delta_m\leq u_i\leq \bm{u}^*_i+\delta_m<v
\end{equation*}
for any $u_i$ satisfying $\vert u_i-\bm{u}^*_i\vert\leq \delta_m$, and then $\ell_{s}(u_i)\geq \frac{u_i-\epsilon}{v-\epsilon}$ and $\ell_{s}(u_i^*)=0$. 
If $\epsilon\leq u_i\leq \epsilon+\delta_m$, we get that 
\begin{equation*}
\begin{aligned}
    &C\ell_{s}(u_i)-C\ell_{s}(u_i^*)+\bm{\lambda}^*_i(u_i-\bm{u}^*_i)
    \geq C(\frac{u_i}{v-\epsilon}-\frac{\bm{u}^*_i}{v-\epsilon})+\bm{\lambda}^*_i(u_i-\bm{u}^*_i)
    =(u_i-\bm{u}^*_i)(\frac{C}{v-\epsilon}+\bm{\lambda}^*_i)\geq 0.\\
   & (\frac{C}{v-\epsilon}+\bm{\lambda}^*_i\geq \frac{C}{v-\epsilon}-\sqrt{\frac{2C}{\gamma}}=\sqrt{\frac{C}{\gamma}}(\frac{\sqrt{\gamma C}}{v-\epsilon}-\sqrt{2}\geq 0)
\end{aligned}
\end{equation*}
If $\epsilon-\delta_m\leq u_i<\epsilon$, we get that $\ell_{s}(u_i)=\ell_{s}(\bm{u}^*_i)=0$ and then
\begin{equation*}
\begin{aligned}
    C\ell_{s}(u_i)-C\ell_{s}(u_i^*)+\bm{\lambda}^*_i(u_i-\bm{u}^*_i)
    = 0+\bm{\lambda}^*_i (u_i-\epsilon)\geq 0.
\end{aligned}
\end{equation*}
Hence $C\mathcal{L}_s(u_{\mathring{\Xi}})-C\mathcal{L}_s(u_{\mathring{\Xi}}^*)+\langle u_{\mathring{\Xi}}-u_{\mathring{\Xi}}^*,\bm{\lambda}^*_{\mathring{\Xi}}\rangle=\sum_{i\in\mathring{\Xi}}[C\ell_{s}(u_i)-C\ell_{s}(u_i^*)+\bm{\lambda}^*_i(u_i-\bm{u}^*_i)]\geq 0$.

Since $\bm{u}^*_i\leq \epsilon$ for $i\in\Xi_1^*$, we have 
\begin{equation*}
   u_i\leq \bm{u}^*_i+\delta_m<v
\end{equation*}
for any $u_i$ satisfying $\vert u_i-\bm{u}^*_i\vert\leq \delta_m$, and then $C\ell_{s}(u_i)\geq C\ell_{s}(u_i^*)=0$. Since $\bm{u}^*_i\geq \sqrt{2\gamma C}+\epsilon\geq 2v-\epsilon$ for $i\in\Xi_3^*$, we have 
\begin{equation*}
   u_i\geq \bm{u}^*_i-\delta_m>v
\end{equation*}
for any $u_i$ satisfying $\vert u_i-\bm{u}^*_i\vert\leq \delta_m$, and then $C\ell_{s}(u_i)= C\ell_{s}(u_i^*)=C$. Hence $C\mathcal{L}_s(u_{\mathring{\Xi}_c})-C\mathcal{L}_s(u_{\mathring{\Xi}_c}^*)=\sum_{i\in\mathring{\Xi}_c}[C\ell_{s}(u_i)-C\ell_{s}(u_i^*)]\geq 0$.
In summary, (\ref{sim_inequ}) holds for $\gamma C\geq 2(v-\epsilon)^2$.

By amalgamating  Case I with Case II, it follows that $(\bm{w}^*;b^*;\bm{u}^*)$ is a local minimizer, and hence we complete the proof.
\end{proof}

\section{Fast Algorithm}
In this section, we introduce the concept of support vectors in our $\ell_s$-SVM classifier.  By utilizing them as the selected working set during the updating of all sub-problems, we devise a fast ADMM algorithm to solve problem \eqref{orig_pro}.
 Through the strategic combination of ADMM with carefully selected working sets, we aim to enhance the optimization process and address the challenges posed by the non-convex non-smooth $\ell_s$-SVM model.
\subsection{$\ell_s$ Support Vectors}
Support vectors play a crucial role in SVM. In classification tasks using SVM, the final classifier is mainly influenced by those samples in the training dataset that are closest to the classification hyperplane. These samples participate in determining the decision classification hyperplane and are thus referred to as support vectors. Next, leveraging the concept of proximal stationary point, we offer a clear definition of support vectors in our proposed $\ell_s$-SVM classifier.
\begin{theorem}[$\ell_s$ Support Vectors for $0<\gamma C<2(v-\epsilon)^2$]\label{theo-sv1}
For $0<\gamma C<2(v-\epsilon)^2$, if $(\bm{w}^*,b^*,\bm{u}^*)$ with $\bm{\lambda}^*\in\mathbb{R}^m$ and $\gamma>0$ is a proximal stationary point of  \eqref{orig_pro} , then we obtain 
    \begin{equation}\label{case1}
    \bm{w}^*=-\sum_{i\in T^*}\bm{\lambda}^*_iy_ix_i,\quad \bm{\lambda}^*_i=0\ \textup{for}\ i\in T^*_c
    \end{equation}
where $T^*:=\{i\ \vert\ \bm{\lambda}^*_i\in[-\frac{C}{v-\epsilon},0)\}$. The training vectors $\{x_i\ \vert\ i\in T^*\}$ are called the $\ell_s$ support vectors. For any $i\in T^*$, the $\ell_s$ support vector $x_i$ satisfies 
\begin{equation*}
    \begin{cases}
    y_i(\langle \bm{w}^*, x_i\rangle +b^*)=1-\epsilon,\ i\in T_1^*:=\{i\in T^*\ :\ \bm{\lambda}^*_i\in(-\frac{C}{v-\epsilon},0)\}\\
    y_i(\langle \bm{w}^*, x_i\rangle +b^*)\in[1+\frac{\gamma C}{2(v-\epsilon)}-v,1],\ i\in T^*_2:=\{i\in T^*\ :\ \bm{\lambda}^*_i=-\frac{C}{v-\epsilon}\}.
    \end{cases}
\end{equation*}
\end{theorem}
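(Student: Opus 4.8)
The plan is to derive both assertions directly from the four proximal-stationarity conditions in \eqref{def_sta}, reusing the componentwise analysis already carried out in the proof of \Cref{suf_nec_con} for the regime $0<\gamma C<2(v-\epsilon)^2$. First I would use the stationarity relation $\bm{w}^*+A^{\top}\bm{\lambda}^*=\bm{0}$ together with the fact that the columns of $A^{\top}$ are the vectors $y_i\bm{x}_i$, to write $\bm{w}^*=-A^{\top}\bm{\lambda}^*=-\sum_{i=1}^{m}\bm{\lambda}^*_iy_i\bm{x}_i$. Everything then reduces to locating the indices where $\bm{\lambda}^*_i\neq 0$.

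Next I would invoke \eqref{0_2v}, which was established precisely under the hypothesis $0<\gamma C<2(v-\epsilon)^2$ and partitions $[m]$ into the sets $\Gamma_1^*,\Gamma_2^*,\Gamma_3^*,\Gamma_4^*$. That relation records $\bm{\lambda}^*_i=0$ on $\Gamma_1^*\cup\Gamma_4^*$, $\bm{\lambda}^*_i\in(-\tfrac{C}{v-\epsilon},0)$ on $\Gamma_2^*$, and $\bm{\lambda}^*_i=-\tfrac{C}{v-\epsilon}$ on $\Gamma_3^*$. Hence the index set on which $\bm{\lambda}^*_i\in[-\tfrac{C}{v-\epsilon},0)$ is exactly $\Gamma_2^*\cup\Gamma_3^*$, so $T^*=\Gamma_2^*\cup\Gamma_3^*$ while $\bm{\lambda}^*_i=0$ for every $i\in T^*_c=\Gamma_1^*\cup\Gamma_4^*$. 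Discarding the vanishing summands then collapses $\bm{w}^*=-\sum_{i=1}^m\bm{\lambda}^*_iy_i\bm{x}_i$ to $-\sum_{i\in T^*}\bm{\lambda}^*_iy_i\bm{x}_i$, which is \eqref{case1}; matching the strict and boundary cases identifies $T_1^*=\Gamma_2^*$ and $T_2^*=\Gamma_3^*$.

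For the geometric characterization I would push the ranges on $\bm{u}^*_i$ supplied by \eqref{0_2v} through the feasibility equation $\bm{u}^*+A\bm{w}^*+b^*\bm{y}=\bm{1}$, i.e.\ $\bm{u}^*_i=1-y_i(\langle\bm{w}^*,\bm{x}_i\rangle+b^*)$. On $T_1^*=\Gamma_2^*$ one has $\bm{u}^*_i=\epsilon$, giving $y_i(\langle\bm{w}^*,\bm{x}_i\rangle+b^*)=1-\epsilon$. On $T_2^*=\Gamma_3^*$ one has $\epsilon\leq\bm{u}^*_i\leq v-\tfrac{\gamma C}{2(v-\epsilon)}$, and substituting yields $1+\tfrac{\gamma C}{2(v-\epsilon)}-v\leq y_i(\langle\bm{w}^*,\bm{x}_i\rangle+b^*)\leq 1-\epsilon$; since $1-\epsilon\leq 1$, this sits inside the claimed interval $[1+\tfrac{\gamma C}{2(v-\epsilon)}-v,\,1]$.

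I expect the main difficulty to be bookkeeping rather than a genuine obstacle: because \Cref{suf_nec_con} already distilled the componentwise link between $\bm{\lambda}^*$ and $\bm{u}^*$ out of the proximal-operator formula \eqref{def_prox1}, the only real care needed is to confirm that $T^*$, defined here through $\bm{\lambda}^*$, coincides with $\Gamma_2^*\cup\Gamma_3^*$, defined there through $\bm{s}^*=\bm{u}^*-\gamma\bm{\lambda}^*$, keeping the endpoint split at $\bm{s}^*_i=v+\tfrac{\gamma C}{2(v-\epsilon)}$ consistent. Should one prefer a self-contained argument avoiding a citation to \eqref{0_2v}, the fallback is to re-derive these relations directly from $\bm{u}^*_i\in\textup{Prox}_{\gamma C\ell_s}(\bm{u}^*_i-\gamma\bm{\lambda}^*_i)$ via \eqref{def_prox1} and the subdifferential expression \eqref{subdif_slide}, reproducing Case~I of the preceding proof.
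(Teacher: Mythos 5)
Your proposal is correct and follows essentially the same route as the paper's proof: both extract $\bm{w}^*=-A^{\top}\bm{\lambda}^*$ from the first stationarity equation, identify $T_1^*=\Gamma_2^*$ and $T_2^*=\Gamma_3^*$ via the relations \eqref{0_2v} established in the proof of \Cref{suf_nec_con}, and push the resulting values of $\bm{u}^*_i$ through the feasibility equation $\bm{u}^*+A\bm{w}^*+b^*\bm{y}=\bm{1}$. Your additional remarks --- verifying that the partition $\Gamma_1^*,\dots,\Gamma_4^*$ (including the endpoint split at $\bm{s}^*_i=v+\tfrac{\gamma C}{2(v-\epsilon)}$) exhausts $[m]$ so that $T^*=\Gamma_2^*\cup\Gamma_3^*$, and that the derived interval $[1+\tfrac{\gamma C}{2(v-\epsilon)}-v,\,1-\epsilon]$ sits inside the stated $[1+\tfrac{\gamma C}{2(v-\epsilon)}-v,\,1]$ --- are bookkeeping points the paper glosses over, not deviations in method.
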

\begin{proof}
From the derived results (\ref{lam_range}), it is evident that $\bm{\lambda}^*_i\in[-\frac{C}{v-\epsilon},0]$, $i\in\mathbb{N}$, and hence $\bm{\lambda}^*_i=0$ for $i\in T^*_c$. Additionally, based on the relations in (\ref{0_2v}), we establish that  $T^*=T^*_1\cup T^*_2$ with $T^*_1=\Gamma^*_2$ and  $T^*_2=\Gamma^*_3$. Utilizing $\bm{w}^*+A^{\top}\bm{\lambda}^*=0$ and $A=[y_1 x_1,y_2 x_2,\cdots, y_m x_m]^{\top}$, we can express $\bm{w}^*$ as 
\begin{equation*}
    \bm{w}^*=-A^{\top}_{T^*}\bm{\lambda}^*_{T^*}-A^{\top}_{{T}^*_c}\bm{\lambda}^*_{{T}^*_c}=-A^{\top}_{{T}^*}\bm{\lambda}^*_{{T}^*}=-\sum_{i\in T^*}\bm{\lambda}^*_i y_i x_i.
\end{equation*}
Furthermore, given that $\bm{u}_i^*=\epsilon$ for $i\in T_1^*$ and $\bm{u}_i^*\in[\epsilon,v-\frac{\gamma C}{2(v-\epsilon)}]$ for $i\in T_2^*$, and considering $\bm{u}^*+A\bm{w}^*+b^*\bm{y}=1$, we deduce 
\begin{equation*}
\begin{cases}
(A\bm{w}^*+b^*\bm{y})_i=1-\epsilon,\ i\in T^*_1;\\
(A\bm{w}^*+b^*\bm{y})_i\in [1+\frac{\gamma C}{2(v-\epsilon)}-v,1],\ i\in T^*_2.
\end{cases}
\end{equation*}
Thus, we complete the proof.
\end{proof}
\begin{theorem}[$\ell_s$ support vectors for $\gamma C\geq 2(v-\epsilon)^2$]\label{theo-sv2}
For $\gamma C\geq 2(v-\epsilon)^2$, if $(\bm{w}^*,b^*,\bm{u}^*)$ with $\bm{\lambda}^*\in\mathbb{R}^m$ and $\gamma>0$ is a proximal stationary point of  \eqref{orig_pro} , then $\bm{w}^*$ satisfies 
    \begin{equation}\label{case2}
    \bm{w}^*=-\sum_{i\in T^*}\bm{\lambda}^*_iy_ix_i,\quad \bm{\lambda}^*_i=0\ \textup{for}\ i\in T^*_c
    \end{equation}
where $T^*:=\{i\ \vert\ \bm{\lambda}^*_i\in[-\sqrt{\frac{2C}{\gamma}},0)\}$. The training vectors $\{x_i\ \vert\ i\in T^*\}$ are called the $\ell_s$ support vectors. For any $i\in T^*$, the $\ell_s$ support vector $x_i$ satisfies 
\begin{equation*}
    y_i(\langle \bm{w}^*,x_i\rangle+b^*)=1-\epsilon.
\end{equation*}
\end{theorem}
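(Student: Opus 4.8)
The plan is to read everything off the first-order characterization already packaged in \Cref{suf_nec_con}(ii). The key structural observation is that in the regime $\gamma C\ge 2(v-\epsilon)^2$ the closed form of the proximal operator in \eqref{solu_geq} has no genuinely fractional branch: the output is either $s$ or $\epsilon$, never the linear value $s-\gamma C/(v-\epsilon)$ that appears in \eqref{def_prox1}. Consequently the multiplier can be nonzero only where the prox output is clamped to $\epsilon$, i.e.\ on the set $\Xi_2^*$ from \eqref{def_set}, and there $\bm{u}^*_i=\epsilon$ uniformly. This is exactly why the present theorem, unlike \Cref{theo-sv1}, needs no splitting into $T_1^*,T_2^*$ and yields the single clean equality $y_i(\langle\bm{w}^*,x_i\rangle+b^*)=1-\epsilon$.

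First I would invoke \eqref{geq2v}, which was derived inside the proof of \Cref{suf_nec_con}(ii) precisely for the regime $\gamma C\ge 2(v-\epsilon)^2$. It gives $\bm{\lambda}^*_i=0$ on $\Xi_1^*\cup\Xi_3^*$, and $-\sqrt{2C/\gamma}\le\bm{\lambda}^*_i<0$ together with $\bm{u}^*_i=\epsilon$ on $\Xi_2^*$. Comparing this with the definition $T^*=\{i:\bm{\lambda}^*_i\in[-\sqrt{2C/\gamma},0)\}$, I conclude $T^*=\Xi_2^*$ and $\bm{\lambda}^*_i=0$ for every $i\in T^*_c$, which is the second assertion in \eqref{case2}.

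Next I would expand the stationarity relation $\bm{w}^*+A^\top\bm{\lambda}^*=0$ from \eqref{def_sta}. Writing $A^\top\bm{\lambda}^*=\sum_{i=1}^m\bm{\lambda}^*_i y_i x_i$ and discarding the terms with $i\in T^*_c$ (where $\bm{\lambda}^*_i=0$) gives $\bm{w}^*=-\sum_{i\in T^*}\bm{\lambda}^*_i y_i x_i$, the first assertion in \eqref{case2}. Finally, on $T^*=\Xi_2^*$ we have $\bm{u}^*_i=\epsilon$; substituting into the primal feasibility $\bm{u}^*+A\bm{w}^*+b^*\bm{y}=\bm{1}$ yields $(A\bm{w}^*+b^*\bm{y})_i=1-\epsilon$, and since the $i$-th row of $A$ is $y_i x_i^\top$ this is exactly $y_i(\langle\bm{w}^*,x_i\rangle+b^*)=1-\epsilon$.

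The computation is routine; the only substantive ingredient is the reduction $T^*=\Xi_2^*$ with $\bm{u}^*_i\equiv\epsilon$ on it, and that rests entirely on the absence of a fractional fixed point of the prox in this regime, already encoded in \eqref{geq2v}. The main obstacle, such as it is, is bookkeeping: making sure the multiplier interval defining $T^*$ matches the sign and location information in \eqref{geq2v}, and that no index with $\bm{\lambda}^*_i=0$ is inadvertently swept into $T^*$. Once that matching is pinned down, the three claimed identities follow immediately, so the argument should be considerably shorter than that of \Cref{theo-sv1}.
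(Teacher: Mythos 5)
Your proposal is correct and follows essentially the same route as the paper's proof: both invoke the multiplier characterization \eqref{geq2v} (equivalently \eqref{lam_range}) from the proof of \Cref{suf_nec_con} to identify $T^*=\Xi_2^*$ with $\bm{u}^*_i=\epsilon$ there and $\bm{\lambda}^*_i=0$ on $T^*_c$, then read $\bm{w}^*=-\sum_{i\in T^*}\bm{\lambda}^*_i y_i x_i$ off the stationarity relation $\bm{w}^*+A^\top\bm{\lambda}^*=\bm{0}$ and obtain $y_i(\langle\bm{w}^*,x_i\rangle+b^*)=1-\epsilon$ from primal feasibility. Your added remark explaining \emph{why} no case split into $T_1^*,T_2^*$ is needed (absence of the fractional prox branch in this regime) is a nice clarification but does not change the argument.
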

\begin{proof}
The results derived in (\ref{lam_range}) indicate $\lambda_i^*\in[-\sqrt{\frac{2C}{\gamma}},0]$ for $i\in\mathbb{N}$, and hence $\bm{\lambda}^*_i=0$ for $i\in T^*_c$. Basing on (\ref{geq2v}), we establish $T^*=\Xi_2^*$. By incorporating $\bm{w}^*+A^{\top}\bm{\lambda}^*=0$ and $A=[y_1 x_1,y_2 x_2,\cdots, y_m x_m]^{\top}$, we derive 
\begin{equation*}
    \bm{w}^*=-A^{\top} _{T^*}\bm{\lambda}^*_{T^*}=-\sum_{i\in T^*}\bm{\lambda}^*_i y_ix_i.
\end{equation*}
Besides, (\ref{geq2v}) shows $\bm{u}_i=\epsilon$ for $i\in T^*$, which together with $\bm{u}^*+A\bm{w}^*+b^*\bm{y}=\bm{1}$ yield
\begin{equation*}
    (A\bm{w}^*+b^*\bm{y})_i=1-\epsilon
\end{equation*}
for $i\in {T^*}$. Hence we complete the proof.
\end{proof}
\subsection{$\ell_s$-ADMM Framework}
Building upon the theoretical findings from the previous  subsection, we aim to devise an efficient method for the proposed $\ell_s$-SVM classifier model. Motivated by the explicit expression of $\ell_s$-support vectors, we seek to avoid involving all samples in the algorithm's iterations, as this would lead to significant computational complexity, particularly with large-scale training datasets. To address this challenge, we introduce a method where only a subset of samples participates in updating decision variables, and leverage the Alternating Direction Method of Multipliers (ADMM) in conjunction with the technique of working sets to effectively address the $\ell_s$-SVM problem \eqref{orig_pro}. We refer to this approach as the $\ell_s$-ADMM algorithm. 

Given a positive parameter $\delta$, the augmented Lagrangian function of  \eqref{orig_pro}  is 
\begin{equation*}
    L_{\delta}(\bm{w},b,\bm{u},\bm{\lambda})=\frac{1}{2}\Vert \bm{w}\Vert ^2_2+C\mathcal{L}(\bm{u})+\langle\bm{\lambda},\bm{u}+A\bm{w}+b\bm{y}-1\rangle+\frac{\delta}{2}\Vert \bm{u}+A\bm{w}+b\bm{y}-1\Vert_2^2
\end{equation*}
where $\bm{\lambda}$ is Lagrangian multiplier. Fixed the $k$-th iteration points $(\bm{w}^k;b^k;\bm{u}^k;\bm{\lambda}^k)$, we update the $k+1$-th iteration of $\ell_s$-ADMM with following rules:
\begin{equation}\label{subpro}
    \begin{cases}
    \bm{u}^{k+1}=\arg\min L_{\delta}(\bm{w}^k,b^k,\bm{u},\bm{\lambda}^k)\\
    \bm{w}^{k+1}=\arg\min L_{\delta}(\bm{w},b^k,\bm{u}^{k+1},\bm{\lambda}^k)+\frac{\delta}{2}\Vert \bm{w}-\bm{w}^k\Vert^2_{\mathcal{D}^k}\\
    b^{k+1}=\arg\min L_{\delta}(\bm{w}^{k+1},b,\bm{u}^{k+1},\bm{\lambda}^k)
    \end{cases}
\end{equation}
where $\mathcal{D}^k$ represents the symmetric matrix. The selection of $\mathcal{D}^k$ is based on two considerations: (a) To solve $\bm{w}^{k+1}$ exactly, it is necessary to maintain the convexity. (b) The analysis in \Cref{theo-sv1} and \Cref{theo-sv2} indicate a small portion of training set impacts on optimal hyperplane, which drives us to construct the working set in each iteration step to reduce the computational complexity. 

For convenience, some notations are listed. 
Define $\bm{z}^k:=1-A\bm{w}^k-b^k\bm{y}-\frac{\bm{\lambda}^k}{\delta}$; for $0<\gamma C<2(v-\epsilon)^2$, $T_k^1:=\{i:\epsilon<\bm{z}_i^k<\frac{\gamma C}{v-\epsilon}+\epsilon\}$, $T_k^2:=\{i:\frac{\gamma C}{v-\epsilon}+\epsilon\leq \bm{z}_i^k<v+\frac{\gamma C}{2(v-\epsilon)}\}\cup\{i:\bm{z}_i^k=v+\frac{\gamma C}{2(v-\epsilon)}, \bm{\lambda}_i^k\neq 0\}$; for $\gamma C\geq 2(v-\epsilon)^2$, $T^3_k:=\{i:\bm{z}_i^k\in (\epsilon,\sqrt{\frac{2C}{\delta}}+\epsilon)\}\cup\{i:\bm{z}_i^k=\sqrt{\frac{2C}{\delta}}+\epsilon, \bm{\lambda}_i^k\neq 0\}$; 
We design the working set $T_k$ at the $k$-th step as:
\begin{equation}\label{defworkset}
T_k:=
\begin{cases} 
       T_k^1\cup T_k^2,\quad &\textup{for}\ 0<\gamma C<2(v-\epsilon)^2 \\
       T^3_k,\quad &\textup{for}\ \gamma C\geq 2(v-\epsilon)^2
    \end{cases}
\end{equation}
and then $\mathcal{D}^k$ is given by $\mathcal{D}^k=-A_{{T_k}_c}^{\top}A_{{T_k}_c}$.
Moreover, inspired by (\ref{case1}) and (\ref{case2}), the update rule of multiplier $\bm{\lambda}^{k+1}$ is 
    \begin{equation}\label{up_lambda}
        \begin{cases}
&\lambda_{T_k}^{k+1}=\lambda_{T_k}^{k}+\eta\delta(\bm{u}^{k+1}+Aw^{k+1}+b^{k+1}\bm{y}-1)_{T_k}\\
      &\lambda_{{T_k}_c}^{k+1}=0
        \end{cases}
    \end{equation}
where step-size parameter $\eta\in(0,\frac{1+\sqrt{5}}{2})$. In the following, we give the analytic solution for subproblems (\ref{subpro}):
\begin{itemize}
    \item [(i)] Updating $\bm{u}^{k+1}$. The $\bm{u}$-subproblem can be written as 
    \begin{equation*}
        \begin{aligned}
            \bm{u}^{k+1}&=\arg\min_{\bm{u}}\{C\mathcal{L}(\bm{u})+\langle \bm{\lambda}^k, \bm{u}\rangle+\frac{\delta}{2}\Vert \bm{u}+Aw^k+b^ky-1\Vert_2^2\}\\
            &=\arg\min_{\bm{u}}\{C\mathcal{L}(\bm{u})+\frac{\delta}{2}\Vert \bm{u}-z^k\Vert^2_2\}\\
            &=\textup{Prox}_{\frac{C}{\delta}\mathcal{L}}(z^k)
        \end{aligned}
    \end{equation*}
 Then (\ref{def_prox1}) and (\ref{solu_geq}) show that  
\begin{equation}\label{up_u1}
        \begin{cases}
        u_{T_k^1}^{k+1}=\epsilon\\
        u_{T_k^2}^{k+1}=z_{T_k^2}^{k}-\frac{C}{\delta (v-\epsilon)}\\
        u_{{T_k}_c}^{k+1}=z^k_{{T_k}_c},
        \end{cases}
    \end{equation}
 for $0<\frac{C}{\delta}<2 (v-\epsilon)^2$    
 and 
    \begin{equation}\label{up_u}
        \begin{cases}
        u_{T_k}^{k+1}=\epsilon\\  
        u_{{T_k}_c}^{k+1}=z^k_{{T_k}_c}
        \end{cases}
    \end{equation}
    for $\frac{C}{\delta}\geq 2(v-\epsilon)^2$.
    \item[(ii)] Update $\bm{w}^{k+1}$. The $\bm{w}$-subproblem can be written as 
    \begin{equation*}
        \bm{w}^{k+1}=\arg\min_{\bm{w}}\{\frac{1}{2}\Vert \bm{w}\Vert_2^2+\langle\bm{\lambda}^k,Aw\rangle+\frac{\delta}{2}\Vert \bm{u}^{k+1}+Aw+b^ky-1\Vert_2^2+\frac{\delta}{2}\Vert \bm{w}-\bm{w}^k\Vert_{\mathcal{D}^k}^2\}   
    \end{equation*}
    In view of $A^{\top}A=A^{\top}_{T_k}A_{T_k}+A^{\top}_{{T_k}_c}A_{{T_k}_c}$, we have 
    \begin{equation}\label{solve_w}
        (I+\delta A^{\top}_{T_k} A_{T_k})\bm{w}^{k+1}+\delta A^{\top}(\frac{\bm{\lambda}^k}{\delta}+\bm{u}^{k+1}+b^ky-1)+\delta A^{\top}_{{T_k}_c}A_{{{T_k}_c}}\bm{w}^k=0,
    \end{equation}
    which combining with the fact that 
    \begin{equation*}
    \begin{aligned}
    A^{\top}(\frac{\bm{\lambda}^k}{\delta}+\bm{u}^{k+1}+b^ky-\bm{1})=&\sum_{i\in T_k}A^{\top}_i(\frac{\bm{\lambda}^k}{\delta}+\bm{u}^{k+1}+b^ky-1)_i+\sum_{i\in {T_k}_c}A^{\top}_i(\frac{\bm{\lambda}^k}{\delta}+\bm{u}^{k+1}+b^ky-1)_i\\
        =&\sum_{i\in T_k}A^{\top}_i(\frac{\bm{\lambda}^k}{\delta}+\bm{u}^{k+1}+b^ky-1)_i-\sum_{i\in {T_k}_c}A^{\top}_iA_iw^k
    \end{aligned}
    \end{equation*}
    yield 
    \begin{equation}\label{up_w}
      (I+\delta A^{\top}_{T_k} A_{T_k})\bm{w}^{k+1}+\delta A_{T_k}^{\top}(\frac{\bm{\lambda}^k}{\delta}+\bm{u}^{k+1}+b^ky-1)_{T_k}=0.
    \end{equation}
    If $n\leq \vert T_k\vert$, we have 
    \begin{equation}\label{up_w1}
        \bm{w}^{k+1}=-\delta(I+\delta A^{\top}_{T_k} A_{T_k})^{-1}A^{\top}_{T_k}(\frac{\bm{\lambda}^k}{\delta}+\bm{u}^{k+1}+b^ky-1)_{T_k}; \end{equation} 
    and if $n>\vert T_k\vert$, the Sherman-Morrison-Woodbury formula~\cite{golub2013matrix} yields $(I+\delta A^{\top}_{T_k} A_{T_k})^{-1}=I-\delta A^{\top}_{T_k}(I+\delta A_{T_k} A_{T_k}^{\top})^{-1}A_{T_k}$, and hence by directly calculating, we obtain
    \begin{equation}\label{up_w2}
        \begin{aligned}
        \bm{w}^{k+1}
        =&-\delta A_{T_k}^{\top}[I-\delta(I+\delta A_{T_k}A^{\top}_{T_k} )^{-1}A_{T_k}A_{T_k}^{\top}](\frac{\bm{\lambda}^k}{\delta}+\bm{u}^{k+1}+b^k\bm{y}-\bm{1})_{T_k}\\
        =&-\delta A_{T_k}^{\top}(I+
        \delta A_{T_k}A_{T_k}^{\top})^{-1}(\frac{\bm{\lambda}^k}{\delta}+\bm{u}^{k+1}+b^k\bm{y}-\bm{1})_{T_k}.
        \end{aligned}
    \end{equation}
    \item[(iii)] Update $b^{k+1}$. The $b$-subproblem can be written as 
    \begin{equation*}
        b^{k+1}=\arg\min_{b}\{\langle \bm{\lambda}^k, by\rangle+\frac{\delta}{2}\Vert \bm{u}^{k+1}+A\bm{w}^{k+1}+b\bm{y}-\bm{1}\Vert_2^2\}
    \end{equation*}
    We have 
    \begin{equation*}
        \langle \bm{\lambda}^k, \bm{y}\rangle +\delta \bm{y}^{\top}(\bm{u}^{k+1}+A\bm{w}^{k+1}+b^{k+1}\bm{y}-\bm{1})=0,
    \end{equation*}
    and then 
    \begin{equation}\label{up_b}
      b^{k+1}=\frac{\langle \bm{y},1-\bm{u}^{k+1}-Aw^{k+1}-\frac{\lambda_k}{\delta}\rangle}{m}.  
    \end{equation}
\end{itemize}  

We present the specific details of the $\ell_s$-ADMM in Algorithm \ref{alg:admm_slide}. 
\begin{algorithm}
\renewcommand{\algorithmicrequire}{\textbf{Input:}}
\renewcommand{\algorithmicensure}{\textbf{Output:}}
\caption{$\ell_s$-ADMM for solving \eqref{orig_pro}}
\label{alg:admm_slide}
\begin{algorithmic}[1]
\REQUIRE
\quad
\\ Regularized parameters $C, \delta$; Slide loss function parameters $\epsilon, v$;  stepsize parameter $\eta$; maximal iteration $K$.
\ENSURE the decision hyperplane parameter $(\bm{w}^*;b^*)$.
\STATE{\textbf{Initilization:} $(\bm{w}^0;b^0;\bm{u}^0;\bm{\lambda}^0)$;  k=0}
\REPEAT
\STATE{Updating $T_k$ by \eqref{defworkset} ;}
\STATE{Updating $\bm{U}^{k+1}$ by \eqref{up_u1} and \eqref{up_u};}
\STATE{Updating$\bm{w}^{k+1}$ by \eqref{up_w1} and \eqref{up_w2} ;}
\STATE{Updating $b^{k+1}$ by \eqref{up_b};}
\STATE{Updating $\bm{\lambda}^{k+1}$ by \eqref{up_lambda};}
\STATE{k=k+1;}
\UNTIL{The termination criterion is satisfied or $k>K$}
\RETURN $(\bm{w}^*;b^*)=(\bm{w}^k;b^k)$
\end{algorithmic}
\end{algorithm}

\subsection{Convergence Analysis}
Next, we provide the convergence analysis of Algorithm \ref{alg:admm_slide}.
\begin{theorem}
Suppose $(\bm{w}^*,b^*,\bm{u}^*,\bm{\lambda}^*)$ be the limit point of the sequence $\{(\bm{w}^k,b^k,\bm{u}^k,\bm{\lambda}^k)\}$ generated by $\ell_s$-ADMM method. Then $(\bm{w}^*,b^*,\bm{u}^*,\bm{\lambda}^*)$ is a proximal stationary point with $\gamma=\frac{1}{\delta}$ and also a locally optimal solution to the problem  \eqref{orig_pro} . 
\end{theorem}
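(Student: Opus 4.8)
The plan is to show that any limit point satisfies the four relations defining a proximal stationary point in \eqref{def_sta} with $\gamma=1/\delta$, and then to invoke \Cref{suf_nec_con}(ii) to upgrade this to local optimality. Let $\{(\bm{w}^{k_j},b^{k_j},\bm{u}^{k_j},\bm{\lambda}^{k_j})\}$ be a subsequence converging to $(\bm{w}^*,b^*,\bm{u}^*,\bm{\lambda}^*)$. The whole argument rests on first establishing that the successive differences vanish,
\begin{equation*}
\|\bm{u}^{k+1}-\bm{u}^k\|\to 0,\quad \|\bm{w}^{k+1}-\bm{w}^k\|\to 0,\quad |b^{k+1}-b^k|\to 0,\quad \|\bm{\lambda}^{k+1}-\bm{\lambda}^k\|\to 0,
\end{equation*}
so that the iterates indexed by $k_j+1$ converge to the same limit and the per-step optimality conditions of the subproblems can be passed to the limit.

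Granting this, I would pass to the limit in the four subproblem relations. From the dual update \eqref{up_lambda}, the feasibility residual $\bm{r}^{k+1}:=\bm{u}^{k+1}+A\bm{w}^{k+1}+b^{k+1}\bm{y}-\bm{1}$ obeys $\bm{r}^{k+1}_{T_k}=\tfrac{1}{\eta\delta}(\bm{\lambda}^{k+1}-\bm{\lambda}^k)_{T_k}\to\bm{0}$; on the complement, combining $\bm{u}^{k+1}_{{T_k}_c}=\bm{z}^k_{{T_k}_c}$ from \eqref{up_u1}--\eqref{up_u} with the vanishing primal differences (and the fact that the working set, defined by the strict inequalities in \eqref{defworkset}, stabilizes to a fixed set $T^*$ along the subsequence, so $\bm{\lambda}^k_{{T_k}_c}=\bm{0}$ eventually) gives $\bm{r}^{k+1}_{{T_k}_c}\to\bm{0}$. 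Hence $\bm{r}^{k+1}\to\bm{0}$, yielding in the limit the feasibility condition $\bm{u}^*+A\bm{w}^*+b^*\bm{y}=\bm{1}$. Feeding $\bm{r}^{k+1}\to\bm{0}$ into the $\bm{w}$-stationarity relation \eqref{up_w} and the $b$-relation \eqref{up_b}, together with $\bm{\lambda}^*_{{T^*}_c}=\bm{0}$, produces $\bm{w}^*+A^{\top}\bm{\lambda}^*=\bm{0}$ and $\langle\bm{y},\bm{\lambda}^*\rangle=0$. Finally, the $\bm{u}$-update reads $\bm{u}^{k+1}\in\textup{Prox}_{\frac{C}{\delta}\mathcal{L}_s}(\bm{z}^k)$ with $\bm{z}^k\to\bm{u}^*-\tfrac{1}{\delta}\bm{\lambda}^*$; since the graph of the (set-valued, because $\mathcal{L}_s$ is nonconvex) proximal mapping is closed by outer semicontinuity, the limit satisfies $\bm{u}^*\in\textup{Prox}_{\frac{C}{\delta}\mathcal{L}_s}(\bm{u}^*-\tfrac{1}{\delta}\bm{\lambda}^*)$, which is exactly the fourth relation in \eqref{def_sta} with $\gamma=1/\delta$. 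Collecting these four facts shows $(\bm{w}^*,b^*,\bm{u}^*)$ is a proximal stationary point with $\gamma=1/\delta>0$, and \Cref{suf_nec_con}(ii) then certifies it is a local minimizer of \eqref{orig_pro}.

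The main obstacle is the first step, proving that the successive differences vanish. I would build a Lyapunov/merit function from the augmented Lagrangian $L_{\delta}$ along the iterates and establish a sufficient-decrease inequality of the form $L_{\delta}(\cdots^{k+1})\le L_{\delta}(\cdots^{k})-c\left(\|\bm{u}^{k+1}-\bm{u}^k\|^2+\|\bm{w}^{k+1}-\bm{w}^k\|^2+|b^{k+1}-b^k|^2\right)$ for some $c>0$, then telescope and use boundedness from below to obtain summability and hence convergence to $0$. The delicate points are: (i) the dual-ascent term $\tfrac{1}{\eta\delta}\|\bm{\lambda}^{k+1}-\bm{\lambda}^k\|^2$ must be dominated by the primal differences, which requires expressing $\bm{\lambda}^{k+1}-\bm{\lambda}^k$ through the $\bm{w}$- and $b$-optimality conditions and exploiting the structure of $A_{T_k}$; (ii) the proximal matrix $\mathcal{D}^k=-A_{{T_k}_c}^{\top}A_{{T_k}_c}$ is only negative semidefinite, so I must verify that the effective quadratic $I+\delta A_{T_k}^{\top}A_{T_k}$ governing \eqref{up_w} stays uniformly positive definite and that the indefinite cross terms are absorbed; and (iii) the working set $T_k$ changes across iterations, so both the descent estimate and the stabilization claim $T_k\equiv T^*$ for large $k$ must be controlled uniformly (e.g.\ via the finiteness of the possible index sets together with strict separation of $\bm{z}^*$ from the breakpoints of the prox), which is where the nonconvexity and the combinatorial nature of the working set make the analysis most demanding.
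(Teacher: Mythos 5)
Your limit-passing phase is essentially the paper's argument: the paper likewise uses the pigeonhole principle (finitely many possible working sets) to extract a subsequence along which $T_k\equiv T$ is constant (with further case analysis on $T_k^1$, $T_k^2$ and on whether $T_k$ or $(T_k)_c$ is empty infinitely often), then passes to the limit in the closed-form updates \eqref{up_lambda}, \eqref{up_u1}/\eqref{up_u}, \eqref{up_w}, \eqref{up_b} to obtain the four relations of \eqref{def_sta} with $\gamma=1/\delta$, and finally invokes \Cref{suf_nec_con}(ii) for local optimality. Your use of outer semicontinuity of the prox graph is a mild (and clean) variant of the paper's direct matching against the closed-form prox expressions, and your observation that $\bm{\lambda}^k_{(T_k)_c}=\bm{0}$ feeds the residual argument is exactly how the paper proceeds.

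The genuine gap is your foundational first step. You declare the whole argument to rest on $\|\bm{u}^{k+1}-\bm{u}^k\|\to 0$, $\|\bm{w}^{k+1}-\bm{w}^k\|\to 0$, $|b^{k+1}-b^k|\to 0$, $\|\bm{\lambda}^{k+1}-\bm{\lambda}^k\|\to 0$, to be proven via a sufficient-decrease inequality for $L_\delta$, but you never establish it --- and the obstacles you yourself list are not incidental. Standard nonconvex-ADMM descent proofs control the dual ascent term by the Lipschitz gradient of a smooth block whose optimality condition expresses $\bm{\lambda}^{k+1}-\bm{\lambda}^k$ in primal differences; here the dual update is tied to the $\bm{u}$-block, whose objective $C\mathcal{L}_s$ is nonsmooth and nonconvex, so no such Lipschitz bound is available, and the indefinite term $\frac{\delta}{2}\Vert\bm{w}-\bm{w}^k\Vert^2_{\mathcal{D}^k}$ with $\mathcal{D}^k=-A_{(T_k)_c}^{\top}A_{(T_k)_c}\preceq 0$ together with the combinatorially changing $T_k$ would break a telescoping argument anyway. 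The paper does not attempt any of this: it reads the hypothesis ``$(\bm{w}^*,b^*,\bm{u}^*,\bm{\lambda}^*)$ is \emph{the} limit point of the sequence'' as convergence of the whole iterate sequence, under which consecutive iterates automatically share the limit and one may pass to the limit in the update equations directly (e.g., $\bm{\lambda}^*_T=\bm{\lambda}^*_T+\eta\delta(\bm{u}^*+A\bm{w}^*+b^*\bm{y}-\bm{1})_T$ forces feasibility on $T$). Under that reading your Lyapunov step is unnecessary; under your subsequence reading it is necessary but missing and likely unprovable without additional assumptions on $\delta$, $\eta$, and the working sets. Either way, the attempt as written does not close: you should either adopt the stronger convergence hypothesis explicitly, or supply the descent analysis you only sketch.
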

\begin{proof} 
Firstly, considering the case that the set $\Lambda_1:=\{k\ |\ T_k=\emptyset\}$ is a finite subset of $\mathbb{N}$, i.e., $\vert\Lambda_1\vert<\infty$, we need to further discuss whether the set $\Lambda_2:=\{k\ |\  (T_k)_c=\emptyset,\ k\in\mathbb{N}\setminus \Lambda_1\}$ is a finite subset. 
\begin{itemize}
    \item[(A)] If $\Lambda_2$ is a finite subset, we have $T_k\neq\emptyset$ and $(T_k)_c\neq\emptyset$ for any $k\in\mathbb{N}\setminus(\Lambda_1\cup\Lambda_2)$. Observing that the number of elements of index set $T_k$ is finite for any $k\in\mathbb{N}$, we obtain that there exist infinite subset $J\subseteq \mathbb{N}\setminus(\Lambda_1\cup\Lambda_2)$ and a fixed nonempty set $T$ such that $T_j \equiv T$ for any $j\in J$. Taking the limit along with $J$, i.e., $k\in J$ and $k\rightarrow \infty$, we obtain $z^*=1-Aw^*-b^*\bm{y}-\frac{\bm{\lambda}^*}{\delta}$. Moreover, it follows from (\ref{up_lambda}) that
\begin{equation}
\begin{cases}    \bm{\lambda}^*_T=\bm{\lambda}^*_T+\eta\delta(\bm{u}^*+Aw^*+b^*\bm{y}-1)_{T}\\
    \bm{\lambda}^*_{{T}_c}=0,
    \end{cases}
\end{equation}
which indicates $(\bm{u}^*+Aw^*+b^*\bm{y}-1)_{T}=0$, that is $\bm{z}^*_{T}=\bm{u}^*_T-\frac{1}{\delta}\bm{\lambda}^*_T$.
    \begin{itemize}
        \item [(a)] For $0<\frac{C}{\delta}<2(v-\epsilon)^2$. When the set $\Omega_1:=\{k\ |\ T^1_k=\emptyset,\ k\in J\}$ is a finite set, it yields that $T^1_k\neq\emptyset$ for any $k\in J\setminus \Omega_1$. 
        \begin{itemize}
            \item [(i)] 
        If the set $\Omega_2:=\{k\ |\ T^2_k=\emptyset,\ k\in J\setminus \Omega_1\}$ is a finite set, $T_k^1\neq\emptyset$ and $T^2_k\neq\emptyset$ for any $k
        \in J\setminus(\Omega_1\cup\Omega_2)$. Since $T_k^1$ is a finite set for any $k\in J\setminus(\Omega_1\cup\Omega_2)$ , there exists infinite subset $\hat{J}\subseteq J\setminus(\Omega_1\cup\Omega_2)$ and nonempty sets $T^1, T^2$ such that $T_k^1\equiv T^1$, $T_k^2\equiv T^2$ for any $k\in\hat{J}$ and $T^1
        \cup T^2=T$. Taking the limit along with $\hat{J}$, i.e., $k\in \hat{J}$ and $k\rightarrow\infty$, it follows from (\ref{up_u1}) that 
        \begin{equation*}
            \begin{cases}
                \bm{u}^*_{T^1}=\epsilon\\
                \bm{u}^*_{T^2}=\bm{z}^*_{T^2}-\frac{C}{\delta (v-\epsilon)}\\
                \bm{u}^*_{T_c}=\bm{z}^*_{T_c}
            \end{cases}
        \end{equation*}
        which implies $\bm{z}^*_{{T}_c}=\bm{u}^*_{T_c}-\frac{1}{\delta}\bm{\lambda}_{T_c}$,  hence $\bm{z}^*=\bm{u}^*-\frac{1}{\delta}\bm{\lambda}^*$. By directly calculating, we obtain that $\bm{u}^*\in\textup{Prox}_{\frac{C}{\delta}\mathcal{L}}(\bm{z}^*)$, i.e., $\bm{u}^*\in\textup{Prox}_{\frac{C}{\delta}\mathcal{L}}(\bm{u}^*-\frac{1}{\delta}\bm{\lambda}^*)$. 
        \item [(ii)] If the set $\Omega_2:=\{k\ |\ T^2_k=\emptyset,\ k\in J\setminus \Omega_1\}$ is a infinite set, we obtain that $T_k^1=T_k\equiv T$ for any $k\in\Omega_2$. Taking the limit along with $\Omega_2$, i.e., $k\in \Omega_2$ and $k\rightarrow\infty$, it follows from (\ref{up_u1}) that 
        \begin{equation*}
            \begin{cases}
                \bm{u}^*_{T}=\epsilon\\
                \bm{u}^*_{T_c}=\bm{z}^*_{T_c},
            \end{cases}
        \end{equation*}
        which yields $\bm{z}^*=\bm{u}^*-\frac{1}{\delta}\bm{\lambda}^*$, and further implies $\bm{u}^*\in\textup{Prox}_{\frac{C}{\delta}\mathcal{L}}(\bm{u}^*-\frac{1}{\delta}\bm{\lambda}^*)$.
         \end{itemize}
        
        When the set $\Omega_1:=\{k\ |\ T^1_k=\emptyset,\ k\in J\}$ is a infinite set, it yields that $T_k^2=T_k\equiv T$ for any $k\in\Omega_1$. Taking the limit along with $\Omega_1$, i.e., $k\in \Omega_1$ and $k\rightarrow\infty$, it follows from (\ref{up_u1}) that 
        \begin{equation*}
            \begin{cases}
                \bm{u}^*_{T}=\bm{z}^*_{T}-\frac{C}{\delta (v-\epsilon)}\\
                \bm{u}^*_{T_c}=\bm{z}^*_{T_c},
            \end{cases}
        \end{equation*}
        which yields $\bm{z}^*=\bm{u}^*-\frac{1}{\delta}\bm{\lambda}^*$, and further implies $\bm{u}^*\in\textup{Prox}_{\frac{C}{\delta}\mathcal{L}}(\bm{u}^*-\frac{1}{\delta}\bm{\lambda}^*)$.
        
     \item[(b)] For $\frac{C}{\delta}\geq 2(v-\epsilon)^2$. Taking the limit along with $J$, i.e., $k\in J$ and $k\rightarrow \infty$, it follows from (\ref{up_u}) that 
     \begin{equation*}
            \begin{cases}
                \bm{u}^*_{T}=\epsilon\\
                \bm{u}^*_{T_c}=\bm{z}^*_{T_c},
            \end{cases}
        \end{equation*}
        which yields $\bm{z}^*=\bm{u}^*-\frac{1}{\delta}\bm{\lambda}^*$, and further implies $\bm{u}^*\in\textup{Prox}_{\frac{C}{\delta}\mathcal{L}}(\bm{u}^*-\frac{1}{\delta}\bm{\lambda}^*)$.
    \end{itemize}
   Obviously,  the result  $\bm{z}^*=\bm{u}^*-\frac{1}{\delta}\bm{\lambda}^*$ in above all discussions gives $\bm{u}^*+A\bm{w}^*+b^*\bm{y}-\bm{1}=\bm{0}$. Taking the limit along with $J$ in (\ref{up_w}), we obtain 
   \begin{equation*}
 \begin{aligned}
 (I+\delta A^{\top}_{T} A_T)\bm{w}^{*}=&-\delta A_{T}^{\top}(\frac{\bm{\lambda}^*}{\delta}+\bm{u}^{*}+b^* \bm{y}-\bm{1})_{T}\\
 =&-\delta A_{T}^{\top}(\frac{\bm{\lambda}^*}{\delta}-A\bm{w}^*+A\bm{w}^*+\bm{u}^{*}+b^* \bm{y}-\bm{1})_{T}\\
 =&-\delta A_{T}^{\top}(\frac{\bm{\lambda}^*_{T}}{\delta}-A_{T}\bm{w}^*),
 \end{aligned}   
\end{equation*}
which indicates $\bm{w}^*=-A^{\top}_{T}\lambda_T^*=-A^{\top}\bm{\lambda}^*$. 
    \item[(B)] If $\Lambda_2$ is a infinite set, it brings that $T_k\equiv [m]$ for any $k\in \Lambda_2$. Taking the limit along with $\Lambda_2$, i.e., $k\in \Lambda_2$ and $k\rightarrow\infty$, we obtain $\bm{z}^*=\bm{1}-A\bm{w}^*-b^*\bm{y}-\frac{\bm{\lambda}^*}{\delta}$ and  $\bm{\lambda}^*=\bm{\lambda}^*+\eta\delta(\bm{u}^*+Aw^*+b^*\bm{y}-1)$ driving from (\ref{up_lambda}). Thus, $\bm{u}^*+A\bm{w}^*+b^*\bm{y}-\bm{1}=\bm{0}$ and $\bm{z}^*=\bm{u}^*-\frac{1}{\delta}\bm{\lambda}^*$. Under this case, (\ref{up_w}) can be rewritten as 
   \begin{equation*}
      (I+\delta A^{\top} A)\bm{w}^{k+1}+\delta A^{\top}(\frac{\bm{\lambda}^k}{\delta}+\bm{u}^{k+1}+b^k\bm{y}-\bm{1})=0,
    \end{equation*}
  and taking the limit along with $\Lambda_2$, we obtain 
  \begin{equation*}
      \bm{w}^*+\delta A^{\top}(\frac{\bm{\lambda}^*}{\delta}+\bm{u}^*+b^*\bm{y}-\bm{1}+A\bm{w}^*)=\bm{0},
  \end{equation*}
  which implies $\bm{w}^*+A^{\top}\bm{\lambda}^*=\bm{0}$.
    \begin{itemize}
       \item [(a)] For $0<\frac{C}{\delta}<2(v-\epsilon)^2$. When the set $\Omega_1:=\{k\ |\ T^1_k=\emptyset,\ k\in \Lambda_2\}$ is a finite set, it yields that $T^1_k\neq\emptyset$ for any $k\in \Lambda_2\setminus \Omega_1$. 
        \begin{itemize}
            \item [(i)] 
        If the set $\Omega_2:=\{k\ |\ T^2_k=\emptyset,\ k\in \Lambda_2\setminus \Omega_1\}$ is a finite set, $T_k^1\neq\emptyset$   and $T^2_k\neq\emptyset$ for any $k
        \in \Lambda_2\setminus(\Omega_1\cup\Omega_2)$. Since $T_k^1$ is a finite set for any $k\in \Lambda_2\setminus(\Omega_1\cup\Omega_2)$ , there exists infinite subset $\hat{\Lambda}_2\subseteq \Lambda_2\setminus(\Omega_1\cup\Omega_2) $ and nonempty sets $T^1$, $T^2$ such that $T_k^1\equiv T^1$, $T_k^2\equiv T^2$ for any $k\in\hat{\Lambda}_2$  and $T^1
        \cup T^2=[m]$. Taking the limit along with $\hat{\Lambda}_2$, i.e., $k\in \hat{\Lambda}_2$ and $k\rightarrow\infty$, it follows from (\ref{up_u1}) that 
        \begin{equation*}
            \begin{cases}
                \bm{u}^*_{T^1}=\epsilon\\
                \bm{u}^*_{T^2}=\bm{z}^*_{T^2}-\frac{C}{\delta (v-\epsilon)}.
            \end{cases}
        \end{equation*}
 By directly calculating, we obtain that $\bm{u}^*\in\textup{Prox}_{\frac{C}{\delta}\mathcal{L}}(\bm{z}^*)$, i.e., $\bm{u}^*\in\textup{Prox}_{\frac{C}{\delta}\mathcal{L}}(\bm{u}^*-\frac{1}{\delta}\bm{\lambda}^*)$. 
        \item [(ii)] If the set $\Omega_2:=\{k\ |\ T^2_k=\emptyset,\ k\in \Lambda_2\setminus \Omega_1\}$ is a infinite set, we obtain that $T_k^1=T_k\equiv [m]$ for any $k\in\Omega_2$. Taking the limit along with $\Omega_2$, i.e., $k\in \Omega_2$ and $k\rightarrow\infty$, it follows from (\ref{up_u1}) that $\bm{u}^*=\bm{\epsilon}$ which implies $\bm{u}^*\in\textup{Prox}_{\frac{C}{\delta}\mathcal{L}}(\bm{u}^*-\frac{1}{\delta}\bm{\lambda}^*)$.
         \end{itemize}
 
        When the set $\Omega_1:=\{k\ |\ T^1_k=\emptyset,\ k\in \Lambda_2\}$ is a infinite set, it yields that $T_k^2=T_k\equiv [m]$ for any $k\in\Omega_1$. Taking the limit along with $\Omega_1$, i.e., $k\in \Omega_1$ and $k\rightarrow\infty$, it follows from (\ref{up_u1}) that $\bm{u}^*=\bm{z}^*-\frac{C}{\delta (v-\epsilon)}$ which implies $\bm{u}^*\in\textup{Prox}_{\frac{C}{\delta}\mathcal{L}}(\bm{u}^*-\frac{1}{\delta}\bm{\lambda}^*)$.
        
     \item[(b)] For $\frac{C}{\delta}\geq 2(v-\epsilon)^2$. Taking the limit along with $\Lambda_2$, i.e., $k\in \Lambda_2$ and $k\rightarrow \infty$, it follows from (\ref{up_u}) that $\bm{u}^*=\bm{\epsilon}
     $, which implies $\bm{u}^*\in\textup{Prox}_{\frac{C}{\delta}\mathcal{L}}(\bm{u}^*-\frac{1}{\delta}\bm{\lambda}^*)$.
    \end{itemize}
\end{itemize}

Secondly, we consider the case that the set $\Lambda_1:=\{k\ |\ T_k=\emptyset\}$ is a infinite subset of $\mathbb{N}$, i.e., $\vert\Lambda_1\vert=\infty$, which implies that $(T_k)_c=[m]$ for any $k\in \Lambda_1$. Taking the limit along with $\Lambda_1$, i.e., $k\in\Lambda_1$ and $k\rightarrow\infty$, we obtain that $\bm{z}^*=\bm{1}-A\bm{w}^*-b^*\bm{y}-\frac{\bm{\lambda}^*}{\delta}$ and $\bm{\lambda}^*=\bm{0}$ driving from (\ref{up_lambda}). Moreover, it follows from (\ref{up_u1}) and (\ref{up_u}) that $\bm{z}^*=\bm{u}^*$, which further yields $\bm{u}^*+A\bm{w}^*+b^*\bm{y}-\bm{1}=\bm{0}$. By directly calculating, we obtain that $\bm{u}^*\in\textup{Prox}_{\frac{C}{\delta}\mathcal{L}}(\bm{u}^*-\frac{1}{\delta}\bm{\lambda}^*)$. Under this case, (\ref{solve_w}) can be rewritten as 
\begin{equation*}
       \bm{w}^{k+1}+\delta A^{\top}(\frac{\bm{\lambda}^k}{\delta}+\bm{u}^{k+1}+b^k\bm{y}-\bm{1}+A\bm{w}^k)=0, 
    \end{equation*}
then taking the limit along with $\Lambda_1$, we obtain 
\begin{equation*}
    \bm{w}^*+\delta A^{\top}(\frac{\bm{\lambda}^*}{\delta}+\bm{u}^*+b^*\bm{y}-\bm{1}+A\bm{w}^*)=\bm{0},
\end{equation*}
which yields $\bm{w}^*+A^{\top}\bm{\lambda}^*=0$.

Finally, taking the limit along with $k$ in (\ref{up_b}), we obtain 
\begin{equation*}
\begin{aligned}
    b^*=&\frac{\langle \bm{y}, \bm{1}-A\bm{w}^*-\bm{u}^*-\frac{\bm{\lambda}^*}{\delta}\rangle}{m}\\
    =&\frac{\langle \bm{y}, \bm{1}-A\bm{w}^*-\bm{u}^*-b^*\bm{y}+b^*\bm{y}-\frac{\bm{\lambda}^*}{\delta}\rangle}{m}\\
    =&\frac{\langle \bm{y}, b^*\bm{y}-\frac{\bm{\lambda}^*}{\delta}\rangle}{m}\\
    =&b^*-\frac{\langle \bm{y},\frac{\bm{\lambda}^*}{\delta}\rangle}{m},
\end{aligned}
\end{equation*}
which implies $\langle \bm{y}, \bm{\lambda}^*\rangle=0$. 

Basing on above all discussion, we obtain that 
\begin{equation*}
    \begin{cases}
        \bm{w}^*+A^{\top}\bm{\lambda}^*=\bm{0}\\
       \langle \bm{y},\bm{\lambda}^*\rangle=0\\
       \bm{u}^*+A\bm{w}^*+b^*\bm{y}-\bm{1}=\bm{0}\\
       \bm{u}^*\in\textup{Prox}_{\frac{C}{\delta}\mathcal{L}}(\bm{u}^*-\frac{\bm{\lambda}^*}{\delta}).
    \end{cases}
\end{equation*}
Therefore, $(\bm{w}^*,b^*,\bm{u}^*,\bm{\lambda}^*)$ is a proximal stationary point with $\gamma=\frac{1}{\delta}$, and according to \Cref{suf_nec_con}, it is a local minimizer of the problem  \eqref{orig_pro} . This completes the proof.
\end{proof}

\section{Numerical Experiments}
In this section, we conducted numerical experiments on open-source datasets\footnote{Data sources:  https://www.csie.ntu.edu.tw/~cjlin/libsvmtools/datasets/, https://archive.ics.uci.edu/ml/index.php} to demonstrate the effectiveness and robustness of the proposed $\ell_s$-SVM classifier. These datasets include leukemia, vote, splice, adult, cod-rna, phishing, ijcnn1. \Cref{data_svm} summarizes the detailed information about these seven datasets used in the experiments.
\begin{table}[htbp]
\caption{Detailed information of the datasets
}
\vspace{2mm}
\begin{center}
\setlength{\tabcolsep}{1mm}{
\label{data_svm}
\begin{tabular}{|c|c|c|c|}
\hline
Dataset &\# Number of Training Samples & \# Feature &\# Number of Test Samples \\
\hline
\hline
leukemia&38&7129&34\\
vote&435&16&0\\
splice&1000&60&2175\\
phishing&11055&68&0\\
adult&32561&123&16281\\
ijcnn1&49990&22&91701\\
cod-rna&59535&8&271617\\
\hline
\end{tabular}
}
\end{center}
\end{table}

\textbf{Methods to compare.}
We compared the $\ell_s$-SVM classifier with other popular support vector machine (SVM) classifier methods currently available, as detailed in \Cref{svm_solver}. Particularly, to illustrate the necessity of applying different penalty levels to samples lying within the margins of the two-class hyperplane, we considered the RSVM classifier and the $\ell_{s_o}$-SVM classifier. The RSVM is a support vector machine classifier established based on setting parameters $\epsilon=0$ and $v=1$ in the $\ell_s$-SVM framework, while the $\ell_{s_o}$-SVM classifier is established by setting parameters $\epsilon=0$ and $v<1$ in the $\ell_s$-SVM framework.
\begin{table}[htbp]
\caption{Description of compared methods}\label{svm_solver}
\centering
{
\begin{tabular}{c|c}
\toprule 
Solver &  Model\\
\midrule 
$0/1$ SVM & Hard Margin Loss SVM~\cite{wang2021support}\tablefootnote{Code Resource for $0/1$SVM:https://github.com/Huajun-Wang/L01ADMM}\\
SLTSVM & Symmetric LINEX Loss SVM~\cite{si2023symmetric} \tablefootnote{Code Resource for SLTSVM :https://github.com/sqsiqi/SLTSVM}\\
TpinSVM & Truncated Pinball Loss SVM~\cite{singla2021pin}\tablefootnote{Code Resource for TpinSVM :https://github.com/manisha1427/TruncpinTSVM}\\
TLSSVM & Truncated Least Square SVM~\cite{zhou2023unified}\tablefootnote{Code Resource for TLSSVM :https://github.com/stayones/code-UNiSVM/tree/master}\\
RSVM & Ramp Loss SVM \\
$\ell_{s_o}$-SVM& Slide Loss SVM with $\epsilon=0$\\
\bottomrule 
\end{tabular}
}
\end{table}

\textbf{Evaluation criteria.}
To evaluate the performance of all classifiers, we compute the accuracy by calculating the ratio of misclassified samples in the test dataset to the total number of samples. The expression for accuracy is given by:
\begin{equation*}
\textup{Accuracy (acc)}:=1-\frac{1}{2m_{test}}\sum_{i=1}^{m_{test}}\vert\textup{sign}(\langle\bm{w}^*,\bm{x}\rangle+b^*)-y_i\vert,
\end{equation*}
where $m_{test}$ is the total number of test dataset, $\bm{w}^*$ and $b^*$ are the parameters of the decision classification hyperplane obtained, and $\textup{sign}$ denotes the sign function, such that $\textup{sign}(t) = 1$ when $t > 0$; otherwise, $\textup{sign}(t) = 0$. Additionally, we include CPU time as a performance metric.

\textbf{Stopping criteria.}
Motivated by the \Cref{suf_nec_con}, we utilize the proximal stationary point as the termination criterion. The iteration stops immediately when the iteration sequence $(\bm{w}^k;b^k;\bm{u}^k)$ generated by Algorithm \ref{alg:admm_slide} satisfies the following condition:
$$\max\{e_1^k,e_2^k,e_3^k,e_4^k\}<tol$$
where $tol=1e-3$,
$$e_1^k:=\frac{\Vert\bm{w}^k+A^{\intercal}_{T_k}\bm{\lambda}_{T_k}\Vert}{1+\Vert\bm{w}^k\Vert},\quad\quad e_2^k:=\frac{\vert\langle\bm{y}_{T_k},\bm{\lambda}^k_{T_k}\rangle\vert}{1+\vert T_k\vert}$$
$$e_3^k:=\frac{\Vert\bm{1}-\bm{u}^k-A\bm{w}^k-b^k\bm{y}\Vert}{\sqrt{m}},\quad\quad e_4^k:=\frac{\Vert\bm{u}^k-\textup{Prox}_{ C/\delta\mathcal{L}_s}(\bm{u}^k-\bm{\lambda}^k/\delta)\Vert}{1+\Vert\bm{u}^k\Vert}.$$
The termination conditions for the remaining comparison methods are set following the original papers.

\textbf{Parameters setting.} 
In the $\ell_s$-SVM classifier, the regularization parameters $C$ and $\delta$ are selected from the set $\Omega := \{a^{-7}, a^{-6}, \ldots, a^6, a^7\}$, where $a=\sqrt{2}$. The Slide loss function parameter $v$ is chosen from the set $\{0.1, 0.2, \ldots, 0.9, 1\}$, with $\epsilon=v/10$. The step size parameter $\eta=1.618$. The maximum number of iterations $K=1000$. Since the parameters sets of $\lambda$ and $\gamma$ for the TLSSVM classifier are not specified in original paper, we select them from the set $\Omega$ when reproducing the code. The ranges for all parameters of the other comparison methods follow the settings in the original papers. To ensure a fair comparison among different classifier methods, we employ a grid search strategy combined with ten-fold cross-validation to obtain parameters that yield the highest cross-validation accuracy.

\textbf{Experimental result.}
In the following, we apply the classifier methods listed in \Cref{svm_solver} to conduct performance testing on datasets. First, we normalize all sample points to the interval $[-1, 1]$. For datasets without predefined test sets, we conduct ten-fold cross-validation, i.e., using $90\%$ of the samples for training and $10\%$ for testing. We repeat this process ten times and report the average accuracy results. The performance results of all classifiers are shown in \Cref{accresult_svm} and \Cref{cpu_svm}. For the dataset vote and phishing, the CPU time corresponds to the average time taken for one ten-fold cross-validation. ``**" indicates that no result is obtained for the TpinSVM classifier due to its high memory requirements or the iterative runtime exceeding three hours. Since the original papers provide the classification accuracy results for the TLSSVM classifier on the dataset adult and for the $0/1$ SVM on the dataset ijcnn1, we directly cite them here.
\begin{table}[thbp]
\caption{Results of classification accuracy ($\%$) for all support vector machine classifiers.}\label{accresult_svm}
\centering
{
\resizebox{0.9\textwidth}{!}{
\begin{tabular}{c|ccccccc}
\toprule 
Dataset & $\ell_{s}$-SVM &  RSVM & $\ell_{s_o}$-SVM & $0/1$ SVM& SLTSVM & TpinSVM & TLSSVM\\ 
\midrule 
leukemia & \bf{91.18}& \bf{91.18}&\bf{91.18}&\bf{91.18}&55.88&52.94&79.41\\
vote& \bf{94.58}&94.53&94.55&85.39&94.53&94.18&93.35\\
splice &84.51&83.82&\bf{85.10}&84.09&84.05&85.06&84.92\\
phishing & \bf{93.98} & 93.97 & 93.43 & 81.10&92.97 &** & 90.99 \\
adult & \bf{84.97} & 84.57 & 84.92 & 84.79&80.87 &**& 83.32 \\
ijcnn1 & \bf{94.70} & 94.60 & 94.57 & 94.33 &90.50& ** & 90.72\\
cod-rna & \bf{93.08} & 93.06 & 93.07 & 93.07 &91.61 &** & 89.25\\
\midrule
Mean & \bf{91.00} & 90.81 & 90.97 & 87.70&84.34 &77.39 & 87.42\\
\bottomrule 
\end{tabular}
}}
\end{table}

\begin{table}[thbp]
\caption{Results of CPU Time (seconds) for all support vector machine classifiers.}\label{cpu_svm}
\centering
{
\resizebox{0.9\textwidth}{!}{
\begin{tabular}{c|ccccccc}
\toprule 
Dataset & $\ell_{s}$-SVM &  RSVM & $\ell_{s_o}$-SVM & $0/1$ SVM& SLTSVM & TpinSVM & TLSSVM\\ 
\midrule 
leukemia & \bf{0.791} & 0.911 & 0.812 & 0.816 &11.720&54.164 & 0.811 \\
vote & 1.615 & 1.769 & 1.794 & \bf{0.069}&0.159 &6.552 & 0.540 \\
splice & 0.316 & 0.303 & 0.169 & 0.364 &\bf{0.007}&60.344 & 0.192 \\
phishing & \bf{1.673} & 1.906 & 1.809 & 1.728&9.090 &** & 21.310 \\
adult & 7.367 & 5.381 & 6.578 & 13.535&17.869 &**& \bf{0.381} \\
ijcnn1 & \bf{5.374} & 6.291 & 6.033 & 5.496 &10.477& ** & 8.551\\
cod-rna & 2.096 & 3.329 & 3.116 & 2.859 &1.633 &** & \bf{0.077}\\
\bottomrule 
\end{tabular}
}}
\end{table}

From the perspective of classification performance, it is evident that the SLTSVM and TpinSVM classifiers among the comparison methods are not suitable for datasets with a small number of training samples and high-dimensional features. Moreover, the TpinSVM classifier is restricted in its training capacity and is unsuitable for tackling classification problems with large-scale samples. Instead, it is better suited for training on small-sized datasets with low-dimensional features. As for the remaining classifiers, including $0/1$ SVM, SLTSVM, TLSSVM, RSVM, and $\ell_{s_o}$-SVM, although they can be trained on large-scale datasets, it is apparent that our proposed $\ell_s$-SVM classifier generally outperforms them.
In particular, the RSVM and $\ell_{s_o}$-SVM classifiers, corresponding to $\epsilon=0$, exhibit inferior performance compared to $\ell_s$-SVM when dealing with large-sample training sets. This discrepancy arises because these methods excessively penalize samples that are close to the classification hyperplane $f(\bm{x})=\pm 1$ and are correctly classified, leading to poor generalization ability and consequently impacting their performance on test sets. Furthermore, while some classifiers in the comparison methods show the lower CPU times on certain datasets, their corresponding classification performance is notably inferior to that of our proposed $\ell_s$-SVM classifier. Therefore, based on the comprehensive analysis, it's clear that the $\ell_s$-SVM classifier has significant advantages over the other methods.

To evaluate the impact of outliers present in real data on different solvers, we flip the labels of the training sets from above datasets with predefined test sets. We set the flipping rates to $r=\{5\%, 15\%\}$. Subsequently, we trained the aforementioned solvers using the flipped data and examined the classification accuracy on the test sets. The final results are recorded in \Cref{flip5_svm} and \Cref{flip15_svm}. 
The results indicate that as $r$ increases, the classification accuracy of all SVM classifiers decreases on most datasets. However, it can be observed that the classification results of the $\ell_s$-SVM classifier remain relatively stable before and after flipping, and its performance surpasses that of all comparison methods. Therefore, the $\ell_s$-SVM classifier is more robust to outliers compared to other classifiers.

\begin{table}[thbp]
\caption{The classification accuracy ($\%$) results for all support vector machine classifiers with a flipping rate of $r=5\%$.}\label{flip5_svm}
\centering
{
\resizebox{0.9\textwidth}{!}{
\begin{tabular}{c|ccccccc}
\toprule 
Dataset & $\ell_{s}$-SVM &  RSVM & $\ell_{s_o}$-SVM & $0/1$ SVM& SLTSVM & TpinSVM & TLSSVM\\ 
\midrule 
leukemia & \bf{91.18} & \bf{91.18} & \bf{91.18} & \bf{91.18} &58.82&52.94 & 79.41 \\ 
splice & 84.14 & 84.05 & 84.32 & 84.09 &84.69&84.69 & \bf{85.15} \\

adult & \bf{84.77} & 83.96 & 84.57 & 84.55&81.60 &**& 82.54 \\
ijcnn1 & \bf{93.96} & 93.58 & 93.13 & 93.82 &91.11& ** & 90.76\\
cod-rna & \bf{93.07} & 93.04 & 93.06 & 93.02 &92.52 &** & 91.52\\
\midrule
Mean & \bf{89.42} & 89.16 & 89.25 & 89.33&81.74&68.79& 85.87\\
\bottomrule 
\end{tabular}
}}
\end{table}

\begin{table}[thbp]
\caption{The classification accuracy ($\%$) results for all support vector machine classifiers with a flipping rate of $r=15\%$.}\label{flip15_svm}
\centering
{
\resizebox{0.9\textwidth}{!}{
\begin{tabular}{c|ccccccc}
\toprule 
Dataset & $\ell_{s}$-SVM &  RSVM & $\ell_{s_o}$-SVM & $0/1$ SVM& SLTSVM & TpinSVM & TLSSVM\\ 
\midrule 
leukemia & \bf{91.18} & 88.24 & 88.24 & 82.35 &52.94&52.94 & 76.47 \\
splice & \bf{83.31} & 82.81 & 83.03 & 82.71 &82.58&82.57 & 82.76 \\
adult &\bf{84.74} & 83.11 & 84.08 & 84.69&83.22 &**& 82.51 \\
ijcnn1 & \bf{92.79} & 48.22 & 48.51 & 92.39 &80.81& ** & 90.83\\
cod-rna & 92.84 & 92.42 & 92.64 & 92.60 &82.02 &** & \bf{93.07}\\
\midrule
Mean & \bf{88.97} & 78.96 & 79.30 & 86.94&76.31 &67.75 &85.12 \\
\bottomrule 
\end{tabular}
}}
\end{table}
\section{Conclusion}
In this paper, we address the limitations of existing partial loss functions when applied to support vector machine (SVM) classifiers by introducing a new Slide loss function based on confidence margins.  Leveraging the theory of nonsmooth analysis, we derive the expressions of subdifferential and proximal operator for the Slide loss function and establish the Slide loss support vector machine (SVM) classifier model ($\ell_s$-SVM). With these explicit expressions, we define the proximal stationary points of this model and provide theoretical analysis of optimality conditions. Furthermore, we investigate the support vectors of $\ell_s$-SVM using proximal stationary points,  laying the foundation for subsequent algorithmic research. We develop an $\ell_s$-ADMM algorithm with a working set based on these support vectors and conduct relevant convergence analysis.  Finally, the robustness and effectiveness of the $\ell_s$-SVM classifier are validated through numerical experiments.


%


\ifCLASSOPTIONcaptionsoff
  \newpage
\fi



%
\bibliographystyle{IEEEtran}
\bibliography{main}

\begin{thebibliography}{10}
\providecommand{\url}[1]{#1}
\csname url@samestyle\endcsname
\providecommand{\newblock}{\relax}
\providecommand{\bibinfo}[2]{#2}
\providecommand{\BIBentrySTDinterwordspacing}{\spaceskip=0pt\relax}
\providecommand{\BIBentryALTinterwordstretchfactor}{4}
\providecommand{\BIBentryALTinterwordspacing}{\spaceskip=\fontdimen2\font plus
\BIBentryALTinterwordstretchfactor\fontdimen3\font minus \fontdimen4\font\relax}
\providecommand{\BIBforeignlanguage}[2]{{%
\expandafter\ifx\csname l@#1\endcsname\relax
\typeout{** WARNING: IEEEtran.bst: No hyphenation pattern has been}%
\typeout{** loaded for the language `#1'. Using the pattern for}%
\typeout{** the default language instead.}%
\else
\language=\csname l@#1\endcsname
\fi
#2}}
\providecommand{\BIBdecl}{\relax}
\BIBdecl

\bibitem{cortes1995support}
C.~Cortes and V.~Vapnik, ``Support-vector networks,'' \emph{Machine learning}, vol.~20, pp. 273--297, 1995.

\bibitem{cervantes2020comprehensive}
J.~Cervantes, F.~Garcia-Lamont, L.~Rodr{\'\i}guez-Mazahua, and A.~Lopez, ``A comprehensive survey on support vector machine classification: Applications, challenges and trends,'' \emph{Neurocomputing}, vol. 408, pp. 189--215, 2020.

\bibitem{brooks2011support}
J.~P. Brooks, ``Support vector machines with the ramp loss and the hard margin loss,'' \emph{Operations research}, vol.~59, no.~2, pp. 467--479, 2011.

\bibitem{DBLP:books/daglib/0097035}
V.~Vapnik, \emph{Statistical learning theory}.\hskip 1em plus 0.5em minus 0.4em\relax Wiley, 1998.

\bibitem{evgeniou2000regularization}
T.~Evgeniou, M.~Pontil, and T.~Poggio, ``Regularization networks and support vector machines,'' \emph{Advances in computational mathematics}, vol.~13, pp. 1--50, 2000.

\bibitem{wang2020comprehensive}
Q.~Wang, Y.~Ma, K.~Zhao, and Y.~Tian, ``A comprehensive survey of loss functions in machine learning,'' \emph{Annals of Data Science}, pp. 1--26, 2020.

\bibitem{yin2014fault}
S.~Yin, X.~Zhu, and C.~Jing, ``Fault detection based on a robust one class support vector machine,'' \emph{Neurocomputing}, vol. 145, pp. 263--268, 2014.

\bibitem{zhang2001text}
T.~Zhang and F.~J. Oles, ``Text categorization based on regularized linear classification methods,'' \emph{Information retrieval}, vol.~4, pp. 5--31, 2001.

\bibitem{wang2008hybrid}
L.~Wang, J.~Zhu, and H.~Zou, ``Hybrid huberized support vector machines for microarray classification and gene selection,'' \emph{Bioinformatics}, vol.~24, no.~3, pp. 412--419, 2008.

\bibitem{jumutc2013fixed}
V.~Jumutc, X.~Huang, and J.~A. Suykens, ``Fixed-size pegasos for hinge and pinball loss svm,'' in \emph{The 2013 International Joint Conference on Neural Networks (IJCNN)}.\hskip 1em plus 0.5em minus 0.4em\relax IEEE, 2013, pp. 1--7.

\bibitem{liang2021support}
Z.~Liang and L.~Zhang, ``Support vector machines with the $\varepsilon$-insensitive pinball loss function for uncertain data classification,'' \emph{Neurocomputing}, vol. 457, pp. 117--127, 2021.

\bibitem{yan2020efficient}
Y.~Yan and Q.~Li, ``An efficient augmented lagrangian method for support vector machine,'' \emph{Optimization Methods and Software}, vol.~35, no.~4, pp. 855--883, 2020.

\bibitem{huang2016solution}
X.~Huang, L.~Shi, and J.~A. Suykens, ``Solution path for pin-svm classifiers with positive and negative $\tau$ values,'' \emph{IEEE transactions on neural networks and learning systems}, vol.~28, no.~7, pp. 1584--1593, 2016.

\bibitem{allen2018katyusha}
Z.~Allen-Zhu, ``Katyusha: The first direct acceleration of stochastic gradient methods,'' \emph{Journal of Machine Learning Research}, vol.~18, no. 221, pp. 1--51, 2018.

\bibitem{zhu2020support}
W.~Zhu, Y.~Song, and Y.~Xiao, ``Support vector machine classifier with huberized pinball loss,'' \emph{Engineering Applications of Artificial Intelligence}, vol.~91, p. 103635, 2020.

\bibitem{hsieh2008dual}
C.-J. Hsieh, K.-W. Chang, C.-J. Lin, S.~S. Keerthi, and S.~Sundararajan, ``A dual coordinate descent method for large-scale linear svm,'' in \emph{Proceedings of the 25th international conference on Machine learning}, 2008, pp. 408--415.

\bibitem{huang2015sequential}
X.~Huang, L.~Shi, and J.~A. Suykens, ``Sequential minimal optimization for svm with pinball loss,'' \emph{Neurocomputing}, vol. 149, pp. 1596--1603, 2015.

\bibitem{wang2022safe}
H.~Wang and Y.~Xu, ``A safe double screening strategy for elastic net support vector machine,'' \emph{Information Sciences}, vol. 582, pp. 382--397, 2022.

\bibitem{wu2007robust}
Y.~Wu and Y.~Liu, ``Robust truncated hinge loss support vector machines,'' \emph{Journal of the American Statistical Association}, vol. 102, no. 479, pp. 974--983, 2007.

\bibitem{xu2017robust}
G.~Xu, Z.~Cao, B.-G. Hu, and J.~C. Principe, ``Robust support vector machines based on the rescaled hinge loss function,'' \emph{Pattern Recognition}, vol.~63, pp. 139--148, 2017.

\bibitem{singla2020robust}
M.~Singla, D.~Ghosh, K.~Shukla, and W.~Pedrycz, ``Robust twin support vector regression based on rescaled hinge loss,'' \emph{Pattern Recognition}, vol. 105, p. 107395, 2020.

\bibitem{shen2017support}
X.~Shen, L.~Niu, Z.~Qi, and Y.~Tian, ``Support vector machine classifier with truncated pinball loss,'' \emph{Pattern Recognition}, vol.~68, pp. 199--210, 2017.

\bibitem{chen2018sparse}
L.~Chen and S.~Zhou, ``Sparse algorithm for robust lssvm in primal space,'' \emph{Neurocomputing}, vol. 275, pp. 2880--2891, 2018.

\bibitem{park2011robust}
S.~Y. Park and Y.~Liu, ``Robust penalized logistic regression with truncated loss functions,'' \emph{Canadian Journal of Statistics}, vol.~39, no.~2, pp. 300--323, 2011.

\bibitem{wang2021support}
H.~Wang, Y.~Shao, S.~Zhou, C.~Zhang, and N.~Xiu, ``Support vector machine classifier via $l_{0/1}$ soft-margin loss,'' \emph{IEEE transactions on pattern analysis and machine intelligence}, vol.~44, no.~10, pp. 7253--7265, 2021.

\bibitem{mohri2018foundations}
M.~Mohri, A.~Rostamizadeh, and A.~Talwalkar, \emph{Foundations of machine learning}.\hskip 1em plus 0.5em minus 0.4em\relax MIT press, 2018.

\bibitem{rockafellar2009variational}
R.~T. Rockafellar and R.~J.-B. Wets, \emph{Variational analysis}.\hskip 1em plus 0.5em minus 0.4em\relax Springer Science \& Business Media, 2009, vol. 317.

\bibitem{golub2013matrix}
G.~H. Golub and C.~F. Van~Loan, \emph{Matrix computations}.\hskip 1em plus 0.5em minus 0.4em\relax JHU press, 2013.

\bibitem{si2023symmetric}
Q.~Si, Z.~Yang, and J.~Ye, ``Symmetric linex loss twin support vector machine for robust classification and its fast iterative algorithm,'' \emph{Neural Networks}, vol. 168, pp. 143--160, 2023.

\bibitem{singla2021pin}
M.~Singla, D.~Ghosh, and K.~Shukla, ``pin-tsvm: A robust transductive support vector machine and its application to the detection of covid-19 infected patients,'' \emph{Neural Processing Letters}, vol.~53, no.~6, pp. 3981--4010, 2021.

\bibitem{zhou2023unified}
S.~Zhou and W.~Zhou, ``Unified svm algorithm based on ls-dc loss,'' \emph{Machine Learning}, vol. 112, no.~8, pp. 2975--3002, 2023.

\end{thebibliography}
%








\end{document}